\documentclass[preprint,12pt,authoryear]{elsarticle}

\usepackage[utf8]{inputenc}
\usepackage[T1]{fontenc}

\usepackage{amsmath}
\usepackage{amssymb}
\usepackage{amsfonts}
\usepackage{amsthm}

\usepackage{booktabs}
\usepackage{multirow}
\usepackage{float}
\usepackage{rotating}
\usepackage{wrapfig}
\usepackage{placeins}

\usepackage{nicefrac}
\usepackage{microtype}
\usepackage{xcolor}
\usepackage{url}
\usepackage{hyperref}
\usepackage{array}
\usepackage{tabularx}

\newtheorem{proposition}{Proposition}
\newtheorem{corollary}{Corollary}
\begin{document}

\begin{frontmatter}

%% Title, authors and addresses

%% use the tnoteref command within \title for footnotes;
%% use the tnotetext command for theassociated footnote;
%% use the fnref command within \author or \affiliation for footnotes;
%% use the fntext command for theassociated footnote;
%% use the corref command within \author for corresponding author footnotes;
%% use the cortext command for theassociated footnote;
%% use the ead command for the email address,
%% and the form \ead[url] for the home page:
%% \title{Title\tnoteref{label1}}
%% \tnotetext[label1]{}
%% \author{Name\corref{cor1}\fnref{label2}}
%% \ead{email address}
%% \ead[url]{home page}
%% \fntext[label2]{}
%% \cortext[cor1]{}
%% \affiliation{organization={},
%%            addressline={}, 
%%            city={},
%%            postcode={}, 
%%            state={},
%%            country={}}
%% \fntext[label3]{}

\title{Energy Time-Series Imputation with Differentially Private Diffusion Models via Clipping-Aware Objective Conditioning}
%% Article title

%% use optional labels to link authors explicitly to addresses:
%% \author[label1,label2]{}
%% \affiliation[label1]{organization={},
%%             addressline={},
%%             city={},
%%             postcode={},
%%             state={},
%%             country={}}
%%
%% \affiliation[label2]{organization={},
%%             addressline={},
%%             city={},
%%             postcode={},
%%             state={},
%%             country={}}
\author[aff1]{Huizhen Huang}
\ead{huanghuizhen@stu.mju.edu.cn}

\author[aff1]{Yu Li}
\ead{liyu@stu.mju.edu.cn}

\author[aff1]{Tao Huang\corref{cor1}}
\ead{huang-tao@mju.edu.cn}

\author[aff1]{Chen Hou}
\ead{houchen@mju.edu.cn}

\cortext[cor1]{Corresponding author}

\affiliation[aff1]{
    organization={School of Computer and Big Data, Minjiang University},
    addressline={No. 200 Xiyuangong Road},
    city={Fuzhou},
    postcode={350108},
    state={Fujian},
    country={China}
}

%% Abstract
\begin{abstract}
Reliable recovery of missing measurements is important for monitoring and
analysis in energy time-series systems, where fine-grained measurements may
contain sensitive temporal information. Diffusion models trained with
differentially private stochastic gradient descent (DP-SGD) provide a
promising framework for privacy-sensitive energy time-series imputation.
Under cosine diffusion schedules, late timesteps correspond to low
signal-to-noise ratio (SNR) conditions, where standard $\varepsilon$-prediction
can induce large pre-clipping gradients. Such gradients are more likely to be
clipped, reducing the retained optimization signal.
The artificial intelligence (AI) contribution lies in formulating this
objective--clipping interaction as an objective optimization problem under
fixed-threshold DP-SGD and developing timestep-aware objective conditioning
for diffusion-based energy time-series imputation. The method adopts
$v$-prediction to mitigate late-timestep gradient amplification, uses static
loss weighting as a uniform-scaling control, and introduces
diffusion-schedule-aware dynamic weighting for stronger attenuation before
clipping. For the engineering application, we evaluate the method on five real-world
energy time-series datasets across random point missingness, contiguous block
missingness, persistent outages, and multiple missing-data severities. Under
matched DP-SGD settings, the proposed method consistently improves
imputation utility over the $\varepsilon$-prediction baseline. Gradient
diagnostics reveal lower upper-tail pre-clipping gradient norms, reduced
clipping fractions, and stronger attenuation at late low-SNR timesteps,
supporting the effectiveness of clipping-aware objective conditioning for
energy time-series imputation.
\end{abstract}

% %%Graphical abstract
% \begin{graphicalabstract}
% \centering
% \includegraphics[width=0.95\linewidth]{grabs}
% \end{graphicalabstract}

% %%Research highlights
% \begin{highlights}

% \item Objective conditioning reduces gradient clipping in private diffusion training.

% \item Velocity prediction provides the largest utility gain via target reparameterization.

% \item Timestep-aware weighting attenuates gradients more at late diffusion steps.

% \item The method improves imputation across five real-world energy datasets.

% \item Gradient diagnostics show lower upper-tail gradient norms and clipping fractions.

% \end{highlights}

%% Keywords
\begin{keyword}
Diffusion models \sep
Time-series imputation \sep
Energy time series \sep
Differential privacy \sep
Gradient clipping \sep
Objective conditioning
\end{keyword}

\end{frontmatter}

%% Add \usepackage{lineno} before \begin{document} and uncomment 
%% following line to enable line numbers
%% \linenumbers

%% main text
%%

%% Use \section commands to start a section
\section{Introduction}
\label{sec:introduction}

Time-series data in energy monitoring systems often contain missing
observations because of sensor faults, communication interruptions, device
malfunctions, and signal transmission problems
\citep{lotfipoor2023energy}. In applications such as transformer monitoring,
smart metering, electricity-load monitoring, and building energy management,
imputation is needed to recover missing measurements, while the underlying
temporal records may contain sensitive personal or operational information.
In particular, fine-grained household electricity-consumption profiles can
reveal household activities and occupancy patterns
\citep{molinamarkham2010smartmeter,razavi2019occupancy}. Differential privacy
therefore provides a principled training framework for energy time-series
imputation involving sensitive measurements
\citep{dwork2014algorithmic,mao2024dptsurvey}. Diffusion models are particularly
well suited to this task because they learn conditional denoising processes
for reconstructing missing values
\citep{tashiro2021csdi,alcaraz2023sssd,cao2025timedit}. However, under
differentially private stochastic gradient descent (DP-SGD), the denoising
objective directly interacts with fixed-threshold gradient clipping. When
the objective induces large per-example gradients, clipping attenuates their
contributions to the optimization update and can degrade training utility
\citep{abadi2016deep}.

We identify an objective--clipping interaction that becomes pronounced at
late diffusion timesteps, where diffusion timesteps are distinct from
training iterations. Under cosine schedules, late diffusion timesteps
correspond to low signal-to-noise ratio (SNR) conditions
\citep{nichol2021improved}. In this regime, standard
$\varepsilon$-prediction can induce large pre-clipping gradients that exceed
the fixed clipping threshold, increasing both the frequency and severity of
gradient attenuation. This interaction motivates objective design that
improves the conditioning of diffusion training before DP-SGD clipping.

To address this problem, we develop a timestep-aware objective-conditioning
method for diffusion-based energy time-series imputation trained with
fixed-threshold DP-SGD. The method first adopts $v$-prediction
\citep{salimans2022progressive} as a target reparameterization to mitigate
gradient amplification at late low-SNR diffusion timesteps. We then introduce
static loss weighting as a uniform-scaling control and, building on this
control, develop diffusion-schedule-aware dynamic weighting that applies
stronger objective attenuation toward late low-SNR timesteps before clipping.

The proposed method operates directly at the denoising-objective level while
keeping the clipping-and-noise mechanism, sampling rule, and diffusion
backbone fixed across objective variants. Recent studies have established
the feasibility of training diffusion models with differential privacy across
diverse generative and adaptation settings
\citep{dockhorn2023dpdm,wang2024dppromise,tsai2025dplora}. Building on this
line of research, we investigate how denoising-objective design shapes
pre-clipping gradient behavior and downstream imputation utility in energy
time series.

Experiments on five real-world energy time-series datasets evaluate the
proposed method under matched DP-SGD settings. The engineering evaluation
covers random point missingness, contiguous block missingness, persistent
outages, and multiple missing-data severities. We further examine
suffix-missing forecasting using the same conditional diffusion model,
extending the evaluation to a forecasting setting. A controlled
target--weighting decomposition separates the effects of target
reparameterization and timestep-aware dynamic weighting, while additional
matched DP-SGD settings assess the consistency of these trends across
training conditions. Under the main matched DP-SGD setting, the proposed
method improves imputation utility over the standard
$\varepsilon$-prediction diffusion baseline. Gradient diagnostics further
reveal lower high-percentile pre-clipping gradient norms and reduced overall
clipping fractions, together with stronger gradient attenuation at late
low-SNR diffusion timesteps.

Our contributions are:

\begin{itemize}

    \item \textbf{AI methodological contribution: Objective--clipping problem
    formulation and timestep-aware objective conditioning.}
    We formulate the interaction between cosine-scheduled diffusion objectives
    and fixed-threshold DP-SGD clipping as an objective-design problem in
    energy time-series imputation. We develop a timestep-aware
    objective-conditioning method that combines $v$-prediction as target
    reparameterization, static loss weighting as a uniform-scaling control,
    and diffusion-schedule-aware dynamic weighting to attenuate the objective
    more strongly at late low-SNR diffusion timesteps.

    \item \textbf{AI methodological contribution: Local characterization of
    gradient and clipping behavior.}
    We characterize how deterministic timestep weighting changes pre-clipping
    gradient norms and the effective clipping boundary on the
    unweighted-gradient scale. A controlled target--weighting decomposition
    further separates the effects of target reparameterization and
    timestep-aware dynamic weighting under matched DP-SGD settings.

    \item \textbf{Engineering application contribution: Validation across
    real-world energy time series.}
    We evaluate the proposed method on five real-world energy time-series
    datasets across random point missingness, contiguous block missingness,
    persistent outages, multiple missing-data severities, suffix-missing
    forecasting, and matched DP-SGD settings. The resulting utility
    improvements, together with gradient diagnostics, support the
    effectiveness of timestep-aware objective conditioning across multiple
    energy time-series recovery scenarios.

\end{itemize}

\section{Related Work}
\label{sec:related_work}

\subsection{Private Diffusion Optimization}

Differential privacy in deep learning is commonly implemented through
differentially private stochastic gradient descent (DP-SGD), which applies
fixed-threshold gradient clipping before adding calibrated noise
\citep{abadi2016deep}. Because gradients exceeding the threshold are rescaled
before noise addition, the distribution of pre-clipping gradient norms directly
affects the optimization signal available during private training.

Recent studies have extended diffusion models to differentially private
training. DPDM investigates diffusion-model training under differential
privacy \citep{dockhorn2023dpdm}, DP-Promise improves private diffusion
probabilistic models for image synthesis \citep{wang2024dppromise}, and
DP-LoRA studies parameter-efficient private fine-tuning of diffusion models
\citep{tsai2025dplora}. Collectively, these studies establish the feasibility
of training diffusion models with differential privacy across diverse
generative and adaptation settings, motivating further investigation of their
optimization behavior under DP-SGD.

The objective--clipping interaction at late low-SNR diffusion timesteps has
received limited investigation in private diffusion training. Under cosine
diffusion schedules \citep{nichol2021improved}, late diffusion timesteps
correspond to low signal-to-noise ratio (SNR) conditions, where standard
$\varepsilon$-prediction can produce large gradients before DP-SGD clipping.
This motivates the investigation of objective conditioning for
fixed-threshold DP-SGD clipping, focusing on how denoising objectives shape
pre-clipping gradient behavior and downstream utility.

\subsection{Objective Parameterization and Weighting}

Conditional diffusion models have been successfully applied to time-series
imputation and interpolation, including CSDI, SSSD, and related masked-value
reconstruction methods
\citep{tashiro2021csdi,alcaraz2023sssd,cao2025timedit}. These approaches
establish diffusion modeling as an effective framework for reconstructing
missing temporal observations and primarily focus on imputation quality,
while their optimization behaviors under DP-SGD clipping are not explicitly
characterized.

In standard diffusion training, the denoising objective is commonly defined
using $\varepsilon$-prediction \citep{ho2020ddpm}. Alternative
parameterizations such as $v$-prediction can improve training stability and
sample quality \citep{salimans2022progressive}, while loss-weighting
strategies such as Min-SNR weighting modify the relative contributions of
diffusion timesteps in non-private training
\citep{hang2023minsnr,karras2022edm}. These objective design strategies
provide effective optimization mechanisms for diffusion models, while their
effects on pre-clipping gradient behavior under fixed-threshold DP-SGD
clipping remain to be characterized.

These findings motivate studying diffusion objective optimization under
fixed-threshold DP-SGD clipping. This perspective considers target
reparameterization, static loss weighting, and timestep-aware dynamic
weighting as objective components that shape pre-clipping gradient behavior
and recovery utility under private training.

Beyond diffusion-based approaches, representative time-series architectures
such as PatchTST \citep{nie2023patchtst}, TimesNet
\citep{wu2023timesnet}, and iTransformer
\citep{liu2024itransformer} have shown strong performance in forecasting and
general time-series modeling. Following the comparison setting adopted in
TimeDiT \citep{cao2025timedit}, these models provide non-diffusion references
for evaluating time-series recovery performance.

\subsection{Energy Time-Series Interpolation and Privacy}

Energy monitoring systems provide an important application scenario for
time-series imputation, where sensor faults, communication interruptions,
device malfunctions, and signal-transmission problems can lead to missing
measurements \citep{lotfipoor2023energy}. Fine-grained energy records can also
contain privacy-sensitive information: smart-meter measurements can reveal
household activities
\citep{molinamarkham2010smartmeter}, and residential electricity-consumption
profiles can be used to infer occupancy patterns
\citep{razavi2019occupancy}. These studies establish accurate imputation and
privacy-aware learning as important requirements in practical energy-data
analysis.

Recent diffusion-based approaches have improved energy-related time-series
recovery by modeling complex temporal dependencies and conditional
generation processes. Meanwhile, energy privacy studies have primarily
focused on disclosure risks and privacy-preserving mechanisms. These
directions demonstrate the importance of both recovery utility and privacy
considerations in energy applications.

However, the interaction between diffusion objective optimization and
fixed-threshold DP-SGD clipping in energy time-series recovery has received
limited attention. Energy monitoring therefore provides a concrete
engineering setting for studying how diffusion objective conditioning
influences pre-clipping gradient behavior and recovery utility under private
training.

Overall, prior work establishes three closely related directions:
differentially private diffusion training, diffusion objective design for
time-series recovery, and energy time-series imputation with privacy
considerations. Their intersection---how cosine-scheduled diffusion objectives
shape pre-clipping gradient behavior under fixed-threshold DP-SGD clipping and
how this behavior relates to recovery utility---remains insufficiently
characterized. This motivates the timestep-aware objective-conditioning
method studied in this work.

\section{Method}
\label{sec:method}

\subsection{Preliminaries}
\label{sec:preliminaries}

\paragraph{Time-series interpolation under DP-SGD}
We consider diffusion-based time-series interpolation trained with DP-SGD.
Let $x_0 \in \mathbb{R}^{T \times D}$ denote a time-series segment, where
$T$ is the temporal length and $D$ denotes the feature dimension.
A binary mask $m \in \{0,1\}^{T \times D}$ indicates observed entries.
The model is conditioned on
\[
c = (x_0 \odot m, m),
\]
and reconstructs the masked values $(1-m)\odot x_0$.

The time series is segmented into non-overlapping windows, with each window
treated as an independent training record for DP-SGD. The privacy unit is
defined at the window level, and no cross-window record aggregation is
performed. The resulting task is masked-value interpolation.

\paragraph{Cosine diffusion schedule}
We use a standard diffusion forward process with a cosine schedule:
\begin{equation}
\label{eq:forward_diffusion}
x_t
=
\sqrt{\bar{\alpha}_t}\,x_0
+
\sqrt{1-\bar{\alpha}_t}\,\varepsilon,
\qquad
\varepsilon \sim \mathcal{N}(0,I),
\end{equation}
where $\bar{\alpha}_t$ denotes the cumulative product of the
signal-retention coefficients and determines the retained signal level at
diffusion timestep $t$.

Under the cosine schedule, $\bar{\alpha}_t$ decreases at late diffusion
timesteps, corresponding to low signal-to-noise ratio (SNR) conditions.
This timestep-dependent signal variation provides the basis for designing
objective conditioning aligned with the diffusion process.

\paragraph{DP-SGD clipping}
Training uses DP-SGD.
For a private training record $i$, let
$g_i=\nabla_\theta \mathcal{L}_i$ denote its gradient before DP-SGD
clipping.
DP-SGD clips $g_i$ to a fixed threshold $C$ before adding Gaussian noise:
\begin{equation}
\label{eq:dp_clipping}
\tilde{g}_i
=
g_i
\cdot
\min\left(
1,
\frac{C}{\|g_i\|_2}
\right).
\end{equation}
When $\|g_i\|_2>C$, the per-example gradient is rescaled by
$C/\|g_i\|_2$, attenuating its contribution to the aggregated update.
Therefore, the pre-clipping gradient distribution influences the frequency
and magnitude of clipping. Since the clipping threshold and noise mechanism
are fixed, modifying the denoising objective provides a way to condition the gradients before clipping.

\begin{figure}[t]
    \centering
    \includegraphics[width=0.72\linewidth]{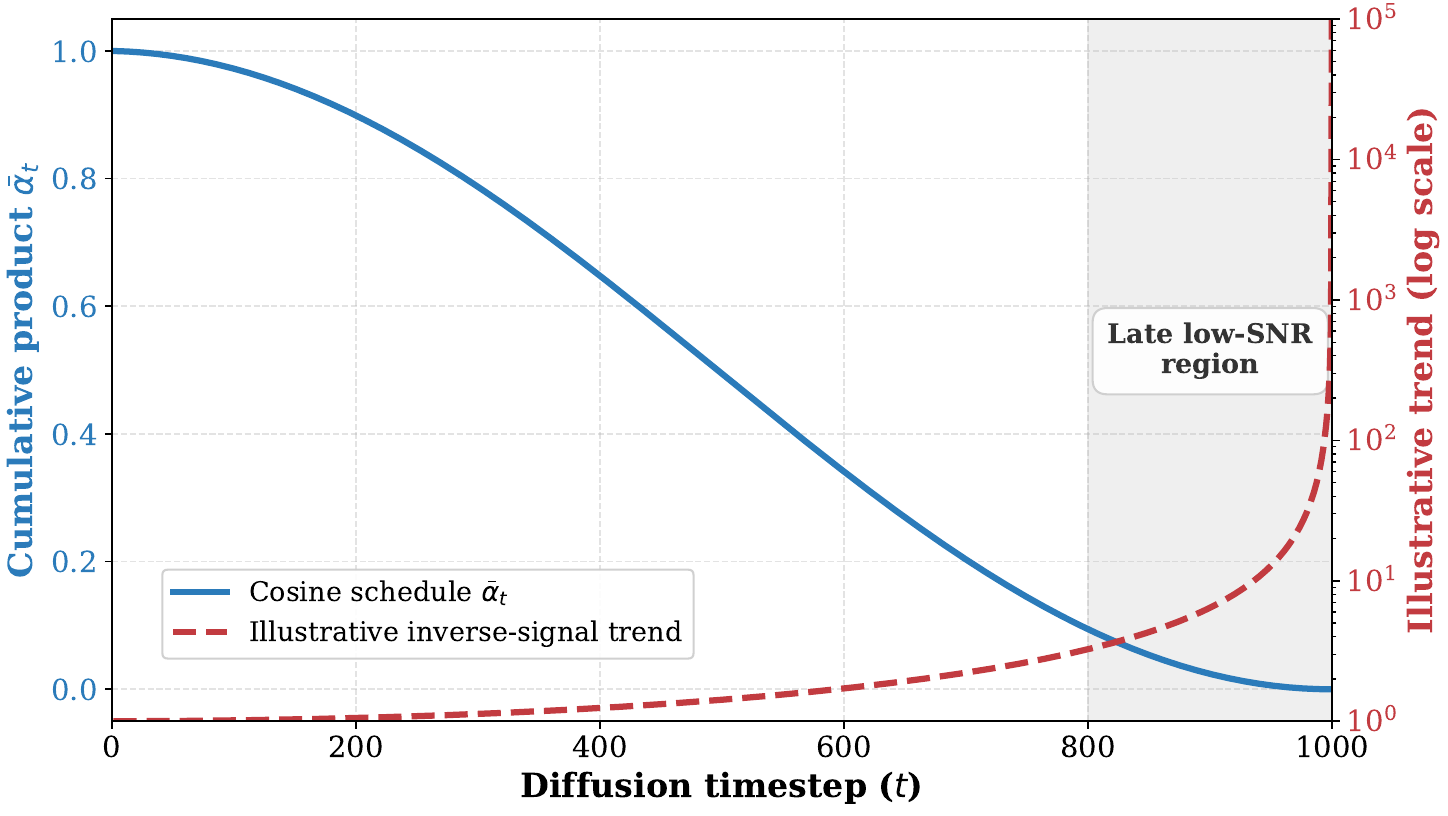}
    \caption{Cosine schedule and late low-SNR region.
    The cumulative signal-retention coefficient $\bar{\alpha}_t$
    decreases at late diffusion timesteps, while the illustrative
    inverse-signal trend increases.}
    \label{fig:tail_sensitivity}
\end{figure}

The objective--clipping interaction becomes pronounced at late diffusion
timesteps.
Figure~\ref{fig:tail_sensitivity} illustrates that these timesteps
correspond to low signal levels under the cosine schedule.
In this region, standard $\varepsilon$-prediction emphasizes noise
prediction and can produce large pre-clipping gradients.
These gradients may exceed the clipping threshold and be attenuated by the
DP-SGD clipping operator, reducing the retained optimization signal.
This motivates conditioning the denoising objective before fixed-threshold
DP-SGD clipping.

\subsection{Our Method: Timestep-Aware Objective Conditioning}
\label{sec:our_method}

The denoising objective provides a mechanism for conditioning gradients
before DP-SGD clipping.
In the core objective comparisons, we keep the backbone, optimizer,
masking protocol, clipping threshold, noise multiplier, and sampling
protocol fixed across objective variants.
The proposed timestep-aware objective-conditioning method modifies only the
prediction target and timestep-dependent weighting.

\paragraph{$v$-prediction as target reparameterization}

The standard objective uses $\varepsilon$-prediction:
\[
\mathcal{L}_{\varepsilon}
=
\mathbb{E}
\left[
\left\|
\varepsilon-\varepsilon_\theta(x_t,t,c)
\right\|_2^2
\right].
\]

We replace it with $v$-prediction:
\[
v
=
\sqrt{\bar{\alpha}_t}\,\varepsilon
-
\sqrt{1-\bar{\alpha}_t}\,x_0,
\]
and optimize
\[
\mathcal{L}_{v}
=
\mathbb{E}
\left[
\left\|
v-v_\theta(x_t,t,c)
\right\|_2^2
\right].
\]

This target reparameterization changes the denoising target representation
and provides a target formulation more suitable for late low-SNR diffusion
timesteps.

\paragraph{Timestep-aware dynamic weighting}

The proposed timestep-aware dynamic weighting uses a timestep-dependent
weight aligned with the diffusion schedule:
\begin{equation}
\label{eq:dynamic_objective}
\mathcal{L}_{\mathrm{dynamic}}
=
\mathbb{E}
\left[
\lambda_t
\left\|
v-v_\theta(x_t,t,c)
\right\|_2^2
\right],
\qquad
\lambda_t=\bar{\alpha}_t.
\end{equation}

Since $\bar{\alpha}_t$ is small at late diffusion timesteps,
timestep-aware dynamic weighting applies stronger attenuation at late
low-SNR diffusion timesteps, where large pre-clipping gradients can occur.

The weight is deterministic once the diffusion schedule is fixed.
It does not depend on private data, gradients, or the DP noise realization.

We choose $\lambda_t=\bar{\alpha}_t$ because this coefficient already
determines the retained signal level in the forward diffusion process,
providing a deterministic timestep-dependent scaling rule aligned with the
diffusion parameterization.

\paragraph{Controlled target--weighting decomposition}

To disentangle target reparameterization and timestep-aware weighting at the
objective level, we define the following factorized objective:
\begin{equation}
\label{eq:factorized_objective}
\mathcal{L}(y,w_t)
=
\mathbb{E}
\left[
w_t
\left\|
y-f_\theta^{(y)}(x_t,t,c)
\right\|_2^2
\right],
\qquad
y\in\{\varepsilon,v\},
\quad
w_t\in\{1,\bar{\alpha}_t\},
\end{equation}
where $f_\theta^{(\varepsilon)}=\varepsilon_\theta$ and
$f_\theta^{(v)}=v_\theta$.

This formulation defines four objective configurations:
$\varepsilon$-prediction without weighting,
$\varepsilon$-prediction with timestep-aware dynamic weighting,
$v$-prediction without weighting, and
$v$-prediction with timestep-aware dynamic weighting.
The decomposition separates target reparameterization from
timestep-dependent weighting at the objective level.
Static loss weighting is considered separately as a uniform-scaling control.

\subsection{Theoretical Analysis}
\label{sec:theory}

This section characterizes the local effect of deterministic timestep
weighting on pre-clipping gradients under a fixed DP-SGD mechanism.
With the clipping threshold $C$, noise multiplier, and sampling rule held
fixed, only the deterministic denoising objective is modified before
clipping.
Additional proof details are provided in
~\ref{app:proof}.

For a private training record $i$ at diffusion timestep $t$, let
$\ell_i(t)$ denote its unweighted denoising loss before timestep weighting.
We consider the weighted objective
\[
\mathcal{L}_{w,i}(t)
=
\lambda_t \ell_i(t),
\qquad
\lambda_t\in(0,1],
\]
where $\lambda_t$ is fixed by the diffusion schedule and does not depend
on private data, gradients, or model parameters.

\begin{proposition}[Deterministic weighting scales pre-clipping gradient magnitude]
\label{prop:gradient_scaling}
For the weighted objective above,
\[
\nabla_\theta \mathcal{L}_{w,i}(t)
=
\lambda_t \nabla_\theta \ell_i(t),
\qquad
\left\|
\nabla_\theta \mathcal{L}_{w,i}(t)
\right\|_2
=
\lambda_t
\left\|
\nabla_\theta \ell_i(t)
\right\|_2.
\]
\end{proposition}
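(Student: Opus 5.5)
The argument rests on one observation: $\lambda_t$ is a constant with respect to $\theta$. By the setup preceding Proposition~\ref{prop:gradient_scaling}, $\lambda_t$ is fixed by the diffusion schedule, and it does not depend on private data, gradients, or the model parameters. For a fixed record $i$ and a fixed timestep $t$, the map $\theta \mapsto \mathcal{L}_{w,i}(t) = \lambda_t \ell_i(t)$ is therefore a scalar multiple of $\theta \mapsto \ell_i(t)$. Linearity of differentiation then gives $\nabla_\theta \mathcal{L}_{w,i}(t) = \lambda_t \nabla_\theta \ell_i(t)$. I will state explicitly that the same sampled noise $\varepsilon$ and conditioning $c$ enter both losses, so that $\ell_i(t)$ is the same function of $\theta$ in both expressions. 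I will also assume $\ell_i(t)$ is differentiable in $\theta$, as is implicit in writing $g_i = \nabla_\theta \mathcal{L}_i$.

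The norm identity follows from absolute homogeneity of the Euclidean norm: $\|\lambda_t g\|_2 = |\lambda_t|\,\|g\|_2$. Since $\lambda_t \in (0,1]$, we have $|\lambda_t| = \lambda_t$. If needed, I will note that for the concrete choice $\lambda_t = \bar{\alpha}_t$ in \eqref{eq:dynamic_objective}, positivity holds because the cosine schedule keeps $\bar{\alpha}_t \in (0,1]$, with the usual small offset so that it is never exactly zero.

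No step here is a real obstacle. The only point requiring care is the justification that $\lambda_t$ carries no $\theta$-dependence. If the timestep $t$ were sampled and the loss were written as an expectation over $t$, the identity would hold conditionally on $t$, that is, pointwise per sampled $(i,t)$. That is exactly the per-record, per-timestep quantity that the clipping operator in \eqref{eq:dp_clipping} sees. I will phrase the result at that level. I may add a remark that, as a corollary, the clipping condition $\|\nabla_\theta \mathcal{L}_{w,i}\|_2 > C$ is equivalent to $\|\nabla_\theta \ell_i\|_2 > C/\lambda_t$. This anticipates the paper's "effective clipping boundary on the unweighted-gradient scale."
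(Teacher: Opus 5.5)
Your proof is correct and follows the paper's own argument: $\lambda_t$ does not depend on $\theta$, so linearity of the gradient gives $\nabla_\theta \mathcal{L}_{w,i}(t)=\lambda_t\nabla_\theta\ell_i(t)$, and absolute homogeneity of the Euclidean norm with $\lambda_t>0$ gives the norm identity. One inaccurate aside, which does not affect the proof because the proposition assumes $\lambda_t\in(0,1]$: in the cosine schedule, the offset $s$ guards the small-$t$ end, and $\bar{\alpha}_t>0$ at late timesteps comes from clipping $\beta_t \le 0.999$.
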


\begin{proof}
Since $\lambda_t$ is deterministic and independent of $\theta$,
\[
\nabla_\theta \mathcal{L}_{w,i}(t)
=
\nabla_\theta
\left(
\lambda_t\ell_i(t)
\right)
=
\lambda_t\nabla_\theta\ell_i(t).
\]
Taking the Euclidean norm and using $\lambda_t>0$ gives the stated
norm identity.
\end{proof}

Proposition~\ref{prop:gradient_scaling} shows that deterministic timestep
weighting rescales the gradient norm before DP-SGD clipping.
Static loss weighting applies uniform scaling, while timestep-aware dynamic
weighting sets $\lambda_t=\bar{\alpha}_t$, producing stronger attenuation at
late low-SNR diffusion timesteps where $\bar{\alpha}_t$ is smaller.

Let
\[
g_i(t)
=
\nabla_\theta\ell_i(t),
\qquad
g_i^{(w)}(t)
=
\lambda_t g_i(t)
\]
denote the unweighted and weighted pre-clipping gradients.

\begin{proposition}[Objective weighting raises the effective clipping boundary on the unweighted-gradient scale]
\label{prop:effective_boundary}
For a fixed diffusion timestep $t$ and clipping threshold $C$, the
weighted example is clipped if and only if
\[
\|g_i(t)\|_2
>
\frac{C}{\lambda_t}.
\]
Consequently, for a fixed distribution of unweighted gradient norms
at timestep $t$,
\[
\mathbb{P}
\left(
\|g_i^{(w)}(t)\|_2>C
\right)
=
\mathbb{P}
\left(
\|g_i(t)\|_2>\frac{C}{\lambda_t}
\right)
\leq
\mathbb{P}
\left(
\|g_i(t)\|_2>C
\right).
\]
\end{proposition}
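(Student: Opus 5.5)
The plan is to reduce everything to Proposition~\ref{prop:gradient_scaling} and then argue monotonicity of the tail probability. First, the clipping operator in \eqref{eq:dp_clipping} acts nontrivially on the weighted example exactly when its pre-clipping norm exceeds the threshold, i.e.\ when $\|g_i^{(w)}(t)\|_2>C$. By Proposition~\ref{prop:gradient_scaling}, $\|g_i^{(w)}(t)\|_2=\lambda_t\|g_i(t)\|_2$. Since $\lambda_t\in(0,1]$ is strictly positive and deterministic, I can divide both sides of the inequality $\lambda_t\|g_i(t)\|_2>C$ by $\lambda_t$ without changing its direction. This gives the equivalence $\|g_i(t)\|_2>C/\lambda_t$, which proves the ``if and only if'' claim as a pointwise statement about events.

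Second, because this equivalence holds for every realization, the events $\{\|g_i^{(w)}(t)\|_2>C\}$ and $\{\|g_i(t)\|_2>C/\lambda_t\}$ coincide. Here the probability is taken over the fixed distribution of the unweighted gradient at timestep $t$, induced by the record, the mask and the diffusion noise at fixed $\theta$. Since the events are equal, their probabilities are equal, which gives the first equality.

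Third, for the inequality, I use $\lambda_t\le 1$, which implies $C/\lambda_t\ge C$. This yields the event inclusion $\{\|g_i(t)\|_2>C/\lambda_t\}\subseteq\{\|g_i(t)\|_2>C\}$: any norm exceeding the larger threshold also exceeds the smaller one. Monotonicity of probability then finishes the argument.

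The only real subtlety is the phrase ``for a fixed distribution of unweighted gradient norms.'' The comparison must be made at the same parameter $\theta$ and the same timestep $t$, so that $g_i(t)$ has the same law in both cases. Along actual training trajectories the weighting changes $\theta$ and hence this law, and I would state explicitly that the result is local in this sense rather than a statement about the trajectories. Apart from that, I would note that $C>0$ and that the boundary case $\lambda_t=1$ gives equality, so nothing further is needed.
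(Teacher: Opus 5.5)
Your proposal is correct and follows essentially the same argument as the paper: the norm identity from Proposition~\ref{prop:gradient_scaling}, division by $\lambda_t>0$ to get the event equivalence, then $\lambda_t\le 1$ (with $C>0$, which the paper leaves implicit) to get $C/\lambda_t\ge C$, the event inclusion, and monotonicity of probability. Your remark that the comparison is local at fixed $\theta$ and $t$ matches the paper's own caveat that the analysis does not characterize training trajectories.
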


\begin{proof}
The weighted example is clipped when
\[
\|g_i^{(w)}(t)\|_2
=
\lambda_t\|g_i(t)\|_2
>
C,
\]
which is equivalent to
$\|g_i(t)\|_2>C/\lambda_t$.
Since $\lambda_t\in(0,1]$, we have $C/\lambda_t\geq C$, so
\[
\left\{
\|g_i(t)\|_2>C/\lambda_t
\right\}
\subseteq
\left\{
\|g_i(t)\|_2>C
\right\},
\]
which proves the probability inequality.
\end{proof}

Proposition~\ref{prop:effective_boundary} shows that deterministic
timestep weighting raises the effective clipping boundary when expressed
on the unweighted-gradient scale; the actual DP-SGD clipping threshold
$C$ remains fixed.
For timestep-aware dynamic weighting with
$\lambda_t=\bar{\alpha}_t$, this effect is strongest at late diffusion
timesteps, where $\bar{\alpha}_t$ is smallest.

For a pre-clipping gradient $g$, the clipping operator in
Eq.~\eqref{eq:dp_clipping} can be written as
\[
\tilde{g}
=
g
\cdot
\min\left(
1,
\frac{C}{\|g\|_2}
\right),
\]
and we define the clipping distortion as
\[
d_C(g)
=
\|g-\tilde{g}\|_2.
\]

\begin{corollary}[Gradient-tail excess determines clipping distortion]
\label{cor:tail_excess}
For the DP-SGD clipping operator,
\[
d_C(g)
=
\left(
\|g\|_2-C
\right)_+.
\]
Thus, clipping distortion is determined by the amount by which the
gradient norm exceeds the clipping threshold.
\end{corollary}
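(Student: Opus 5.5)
The plan is a direct case split on whether clipping is active, using only the definition of the clipping operator in Eq.~\eqref{eq:dp_clipping}. Write $s = \min(1, C/\|g\|_2)$, so that $\tilde g = s\,g$ and hence $g-\tilde g=(1-s)\,g$. Since $s\in[0,1]$, this gives
\[
d_C(g)=\|g-\tilde g\|_2=(1-s)\,\|g\|_2 .
\]
Everything then reduces to evaluating the scalar factor $1-s$ in the two regimes.

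In the unclipped case $\|g\|_2\le C$, we have $s=1$, so $\tilde g=g$ and $d_C(g)=0$. This equals $(\|g\|_2-C)_+$ because $\|g\|_2-C\le 0$. This case also covers $g=0$, where the ratio $C/\|g\|_2$ is read as $+\infty$ and the minimum is $1$. I would state this convention explicitly, since it is the only delicate point in an otherwise routine argument.

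In the clipped case $\|g\|_2>C$, we have $s=C/\|g\|_2$, so
\[
d_C(g)=\Bigl(1-\tfrac{C}{\|g\|_2}\Bigr)\|g\|_2=\|g\|_2-C,
\]
which is positive and therefore equals $(\|g\|_2-C)_+$. Combining the two cases gives the identity. Geometrically, clipping is a radial projection onto the ball of radius $C$, and it removes exactly the excess radial length.

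There is no real obstacle beyond handling $g=0$. Afterwards I would connect the corollary to Propositions~\ref{prop:gradient_scaling} and~\ref{prop:effective_boundary}. Applying the identity to the weighted gradient $g_i^{(w)}(t)=\lambda_t g_i(t)$ gives
\[
d_C\bigl(g_i^{(w)}(t)\bigr)=\bigl(\lambda_t\|g_i(t)\|_2-C\bigr)_+ .
\]
This is nonincreasing as $\lambda_t$ decreases, so dynamic weighting with $\lambda_t=\bar\alpha_t$ reduces distortion most at late timesteps.
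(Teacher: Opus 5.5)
Your proof is correct and uses the same two-case argument as the paper's appendix derivation: the unclipped case gives zero distortion, and the clipped case gives $(1-C/\|g\|_2)\|g\|_2=\|g\|_2-C$. Your explicit treatment of $g=0$ via the convention $C/\|g\|_2=+\infty$ covers a point the paper leaves implicit, and applying the identity to $\lambda_t g_i(t)$ is a valid add-on rather than part of the corollary itself.
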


Corollary~\ref{cor:tail_excess} motivates the use of high-percentile
pre-clipping gradient norms and overall clipping fractions as
clipping-relevant diagnostics.

The analysis is local to a fixed model state, private training record, and
diffusion timestep. It characterizes deterministic objective scaling and its
effect on pre-clipping gradients under a fixed DP-SGD clipping-and-noise
mechanism, without characterizing global training dynamics across
independently trained objective variants.

\section{Experiments}
\label{sec:experiments}

\subsection{Experimental Setup}
\label{sec:experimental_setup}

We evaluate the proposed timestep-aware objective-conditioning method for
differentially private diffusion-based energy time-series interpolation.
The task is masked-value interpolation: given partially observed inputs, the
model reconstructs the masked values. We use five real-world energy
time-series datasets: ETTh1 and ETTm1 \citep{zhou2021informer},
Electricity \citep{trindade2015electricity}, Individual Household Electric
Power Consumption \citep{hebrail2006household}, and Appliances Energy
Prediction \citep{candanedo2017appliances}. For interpolation, we consider
random point missingness, contiguous block missingness, and persistent
outage missingness at missing ratios of $10\%$, $20\%$, $30\%$, and
$50\%$. These patterns represent different structures of missing energy
measurements, ranging from isolated measurement loss to temporary
communication interruption and prolonged sensor or device unavailability.
We additionally evaluate suffix-missing forecasting as an extended task
setting.

Based on the controlled target--weighting decomposition defined in
Section~\ref{sec:our_method}, we compare four objective variants:
$\varepsilon$-prediction without weighting ($\varepsilon$-pred.),
$\varepsilon$-prediction with timestep-aware dynamic weighting
($\varepsilon$-pred., dynamic), unweighted $v$-prediction
($v$-pred., unweighted), and $v$-prediction with timestep-aware dynamic
weighting ($v$-pred., dynamic). Static loss weighting with
$\lambda\in\{0.6,0.9\}$ is additionally used as a uniform-scaling
control in the progressive objective ablation. Unless otherwise stated,
the core objective variants and static loss weighting controls use the same
DP-SGD protocol and share the same backbone, masking protocol, optimizer,
training steps, sampling rule, clipping threshold, noise multiplier, and
checkpoint selection rule. The main experiments are conducted under a
matched DP-SGD setting with
$(\epsilon,\delta)=(10,10^{-5})$. Sensitivity analyses further evaluate
matched settings at $\epsilon\in\{6,8\}$, while comparisons with external
baselines additionally report results at $\epsilon=8$ alongside the main
$\epsilon=10$ setting.

All predictions are inverse-transformed using the same pre-specified
normalization protocol before computing MSE, MAE, and RMSE. Within each
dataset and evaluation condition, all methods are evaluated on the same
data scale and target positions; interpolation metrics are computed only
on masked positions, and forecasting metrics only over the forecast
horizon. Unless otherwise stated, reported results are averaged over three
random seeds ($42$, $43$, and $44$). Seed-level variability and additional
implementation details are reported in the appendix.

The experiments evaluate interpolation utility and mechanism-related
gradient behavior at complementary levels. MSE, MAE, and RMSE quantify
interpolation utility, while the controlled target--weighting decomposition
separates target reparameterization from timestep-aware dynamic weighting.
The controlled gradient diagnostic examines the timestep-dependent gradient
attenuation characterized by Proposition~\ref{prop:gradient_scaling},
while the training-time gradient diagnostics report high-percentile
pre-clipping gradient norms and overall clipping fractions in relation to
Proposition~\ref{prop:effective_boundary} and
Corollary~\ref{cor:tail_excess}.

\subsection{Main Results}
\label{sec:main_results}

\begin{table}[t]
\centering
\caption{
Main interpolation results under $50\%$ missingness.
Here, $\varepsilon$-pred. denotes $\varepsilon$-prediction, and
$v$-pred., dynamic denotes $v$-prediction with timestep-aware dynamic
weighting.
All methods use the same matched DP-SGD configuration indexed by
$(\epsilon,\delta)=(10,10^{-5})$.
Lower values are better.
Results are averaged over three random seeds ($42$, $43$, $44$).
}
\label{tab:main_results}

\small
\setlength{\tabcolsep}{2.0pt}
\renewcommand{\arraystretch}{1.18}

\resizebox{0.98\linewidth}{!}{
\begin{tabular}{ll cc cc cc}
\toprule
\multirow{2}{*}{\textbf{Dataset}} &
\multirow{2}{*}{\textbf{Missingness Pattern}} &
\multicolumn{2}{c}{\textbf{MSE}} &
\multicolumn{2}{c}{\textbf{MAE}} &
\multicolumn{2}{c}{\textbf{RMSE}} \\
\cmidrule(lr){3-4}
\cmidrule(lr){5-6}
\cmidrule(lr){7-8}
& &
$\varepsilon$-pred. & $v$-pred., dynamic &
$\varepsilon$-pred. & $v$-pred., dynamic &
$\varepsilon$-pred. & $v$-pred., dynamic \\
\midrule

\multirow{3}{*}{ETTh1}
& Random Point
& 1181.36 & \textbf{9.12}
& 29.61 & \textbf{2.36}
& 34.37 & \textbf{3.02} \\
& Contiguous Block
& 867.77 & \textbf{27.91}
& 22.71 & \textbf{4.47}
& 29.46 & \textbf{5.28} \\
& Persistent Outage
& 941.54 & \textbf{20.28}
& 24.85 & \textbf{3.62}
& 30.68 & \textbf{4.48} \\
\midrule

\multirow{3}{*}{ETTm1}
& Random Point
& 1173.89 & \textbf{8.83}
& 30.14 & \textbf{1.56}
& 34.26 & \textbf{2.97} \\
& Contiguous Block
& 1051.89 & \textbf{18.84}
& 27.86 & \textbf{3.21}
& 32.43 & \textbf{4.36} \\
& Persistent Outage
& 1070.37 & \textbf{29.45}
& 27.98 & \textbf{4.56}
& 32.72 & \textbf{5.41} \\
\midrule

\multirow{3}{*}{Electricity}
& Random Point
& 2415.27 & \textbf{32.88}
& 31.90 & \textbf{3.37}
& 49.15 & \textbf{5.73} \\
& Contiguous Block
& 2173.88 & \textbf{31.83}
& 28.74 & \textbf{3.10}
& 46.62 & \textbf{5.64} \\
& Persistent Outage
& 2113.45 & \textbf{25.67}
& 27.95 & \textbf{2.79}
& 45.97 & \textbf{5.07} \\
\midrule

\multirow{3}{*}{Household}
& Random Point
& 61.79 & \textbf{0.25}
& 5.98 & \textbf{0.38}
& 7.86 & \textbf{0.50} \\
& Contiguous Block
& 67.11 & \textbf{0.38}
& 6.40 & \textbf{0.46}
& 8.19 & \textbf{0.62} \\
& Persistent Outage
& 63.03 & \textbf{0.38}
& 6.16 & \textbf{0.46}
& 7.94 & \textbf{0.62} \\
\midrule

\multirow{3}{*}{Appliances}
& Random Point
& 417812.50 & \textbf{53959.11}
& 525.00 & \textbf{151.88}
& 646.38 & \textbf{232.29} \\
& Contiguous Block
& 485956.25 & \textbf{38401.84}
& 575.21 & \textbf{127.80}
& 697.11 & \textbf{195.96} \\
& Persistent Outage
& 505110.42 & \textbf{17996.79}
& 551.88 & \textbf{81.65}
& 710.71 & \textbf{134.15} \\
\bottomrule
\end{tabular}
}
\end{table}

Table~\ref{tab:main_results} compares $\varepsilon$-pred. with
$v$-pred., dynamic under $50\%$ missingness. Across all five datasets
and three missingness patterns, $v$-pred., dynamic achieves lower MSE,
MAE, and RMSE in all 15 dataset--pattern combinations. The improvement
is observed under random point missingness, contiguous block missingness,
and persistent outage missingness, indicating that the utility gain is
observed across missingness patterns with different temporal structures.

Because the two objective variants are evaluated under the same matched
DP-SGD training and evaluation conditions, Table~\ref{tab:main_results}
demonstrates the interpolation utility improvement of the proposed
objective configuration over the standard $\varepsilon$-prediction
baseline within the shared diffusion framework.
Section~\ref{sec:objective_ablation} next separates the contributions of
target reparameterization and timestep-aware dynamic weighting and further
analyzes their relationship with the pre-clipping gradient behavior
characterized in Section~\ref{sec:theory}.

\subsection{Objective Ablation and Gradient Diagnostics}
\label{sec:objective_ablation}

\subsubsection{Progressive Objective Ablation}
\label{sec:progressive_ablation}

\begin{table}[t]
\centering
\caption{
Progressive objective ablation under $50\%$ missingness.
All variants use the same matched DP-SGD configuration indexed by
$(\epsilon,\delta)=(10,10^{-5})$.
Results are averaged over three random seeds ($42$, $43$, $44$).
Lower MSE and MAE are better, and the best result for each dataset,
missingness pattern, and metric is shown in bold.
}
\label{tab:progressive_ablation}
\scriptsize
\setlength{\tabcolsep}{2.0pt}
\renewcommand{\arraystretch}{1.12}
\begin{tabular}{@{}llrrrrrr@{}}
\toprule
\multirow{2}{*}{\textbf{Dataset}} &
\multirow{2}{*}{\textbf{Method}} &
\multicolumn{2}{c}{\textbf{Random Point}} &
\multicolumn{2}{c}{\textbf{Contiguous Block}} &
\multicolumn{2}{c}{\textbf{Persistent Outage}} \\
\cmidrule(lr){3-4}
\cmidrule(lr){5-6}
\cmidrule(lr){7-8}
& &
\textbf{MSE} & \textbf{MAE} &
\textbf{MSE} & \textbf{MAE} &
\textbf{MSE} & \textbf{MAE} \\
\midrule

\multirow{5}{*}{ETTh1}
& $\varepsilon$-pred.
& 1181.36 & 29.61
& 867.77 & 22.71
& 941.54 & 24.85 \\
& $v$-pred., unweighted
& 21.90 & 3.67
& 29.80 & 4.57
& 21.81 & 3.75 \\
& $v$-pred., static ($\lambda=0.6$)
& 19.86 & 3.69
& 49.30 & 6.13
& 47.03 & 5.83 \\
& $v$-pred., static ($\lambda=0.9$)
& 25.07 & 3.96
& 35.68 & 5.03
& 44.37 & 5.72 \\
& $v$-pred., dynamic
& \textbf{9.12} & \textbf{2.36}
& \textbf{27.91} & \textbf{4.47}
& \textbf{20.28} & \textbf{3.62} \\
\midrule

\multirow{5}{*}{ETTm1}
& $\varepsilon$-pred.
& 1173.89 & 30.14
& 1051.89 & 27.86
& 1070.37 & 27.98 \\
& $v$-pred., unweighted
& 10.72 & 2.58
& 29.76 & 4.65
& 54.79 & 6.55 \\
& $v$-pred., static ($\lambda=0.6$)
& 11.93 & 2.78
& 24.35 & 3.61
& 42.21 & 5.80 \\
& $v$-pred., static ($\lambda=0.9$)
& 15.73 & 3.12
& 21.85 & 3.68
& 35.09 & 4.97 \\
& $v$-pred., dynamic
& \textbf{8.83} & \textbf{1.56}
& \textbf{18.84} & \textbf{3.21}
& \textbf{29.45} & \textbf{4.56} \\
\midrule

\multirow{5}{*}{Appliances}
& $\varepsilon$-pred.
& 417812.50 & 525.00
& 485956.25 & 575.21
& 505110.42 & 551.88 \\
& $v$-pred., unweighted
& 66492.21 & 178.55
& 42528.65 & 141.34
& 26032.05 & 107.97 \\
& $v$-pred., static ($\lambda=0.6$)
& 59012.34 & 166.06
& 42397.83 & 137.86
& 21131.38 & 85.58 \\
& $v$-pred., static ($\lambda=0.9$)
& 57985.70 & 171.57
& 39510.84 & 138.79
& 25429.74 & 118.34 \\
& $v$-pred., dynamic
& \textbf{53959.11} & \textbf{151.88}
& \textbf{38401.84} & \textbf{127.80}
& \textbf{17996.79} & \textbf{81.65} \\
\bottomrule
\end{tabular}
\end{table}

Table~\ref{tab:progressive_ablation} examines the progressive
objective-conditioning design under $50\%$ missingness.
Replacing $\varepsilon$-prediction with unweighted $v$-prediction
substantially reduces both MSE and MAE across all reported datasets and
missingness patterns, indicating that target reparameterization provides
the main utility gain within the evaluated objective variants.

Static loss weighting provides the uniform-scaling control introduced
in Section~\ref{sec:our_method}. Its effect varies with the fixed
coefficient and missingness pattern: it improves upon unweighted
$v$-prediction in several settings, but can also increase interpolation
error. Thus, uniform scaling with a single coefficient does not
consistently adapt to the timestep-dependent characteristics of the
diffusion process.

In contrast, $v$-pred., dynamic achieves the lowest MSE and MAE in all
nine reported dataset--pattern conditions. The improvement over
$v$-pred., unweighted is retained under random point missingness,
contiguous block missingness, and persistent outage missingness.
These results support the objective-conditioning progression in
Section~\ref{sec:our_method}: target reparameterization accounts for the
main utility gain, while timestep-aware dynamic weighting provides an
additional utility improvement over unweighted $v$-prediction and the
static loss weighting controls under the main matched DP-SGD configuration.

\subsubsection{Controlled Target--Weighting Decomposition}
\label{sec:target_weighting_decomposition}

To more directly separate target reparameterization from timestep-aware
dynamic weighting, we use the controlled target--weighting decomposition
defined in Section~\ref{sec:our_method}. We compare three controlled
transitions among the four core objective variants: applying
timestep-aware dynamic weighting to $\varepsilon$-prediction, replacing
$\varepsilon$-prediction with unweighted $v$-prediction, and applying
timestep-aware dynamic weighting on top of $v$-prediction.
Table~\ref{tab:target_weighting} summarizes these comparisons over the
12 interpolation conditions formed by three missingness patterns and four
missing ratios. Each entry reports the relative reduction in mean MSE / MAE,
where positive values indicate lower error and negative values indicate
degradation.

\begin{table}[t]
\centering
\caption{
Controlled target--weighting decomposition under the matched DP-SGD
configuration indexed by $(\epsilon,\delta)=(10,10^{-5})$.
Each entry reports the relative reduction (\%) in mean MSE / MAE over
12 interpolation conditions comprising three missingness patterns and four
missing ratios. Positive values indicate improvement, whereas negative
values indicate degradation.
}
\label{tab:target_weighting}

\small
\setlength{\tabcolsep}{4.0pt}
\renewcommand{\arraystretch}{1.15}

\begin{tabular}{lccc}
\toprule
\multirow{4}{*}{\textbf{Dataset}}
&
$\varepsilon$-pred.
&
$\varepsilon$-pred.
&
$v$-pred., unweighted
\\
&
$\downarrow$
&
$\downarrow$
&
$\downarrow$
\\
&
$\varepsilon$-pred., dynamic
&
$v$-pred., unweighted
&
$v$-pred., dynamic
\\
&
\textbf{MSE / MAE}
&
\textbf{MSE / MAE}
&
\textbf{MSE / MAE}
\\
\midrule

ETTh1
& 3.34 / 2.59
& 97.99 / 86.34
& 31.23 / 17.37 \\

ETTm1
& -6.88 / -5.25
& 97.17 / 84.18
& 50.09 / 34.02 \\

Electricity
& 0.57 / 0.56
& 97.76 / 84.77
& 40.63 / 26.09 \\

Household
& 12.04 / 11.26
& 99.13 / 90.95
& 44.04 / 23.78 \\

Appliances
& -19.12 / -15.13
& 91.32 / 75.14
& 8.69 / 12.05 \\

\bottomrule
\end{tabular}
\end{table}

Applying timestep-aware dynamic weighting directly to
$\varepsilon$-prediction produces mixed effects: the mean error decreases
on ETTh1, Electricity, and Household and increases on ETTm1 and Appliances.
This mixed pattern indicates that timestep-aware dynamic weighting alone
does not consistently improve utility when applied to the
$\varepsilon$-prediction objective.

In contrast, replacing $\varepsilon$-prediction with unweighted
$v$-prediction produces substantial reductions in both MSE and MAE on
all five datasets. The relative reduction in mean MSE ranges from
$91.32\%$ to $99.13\%$, while the relative reduction in mean MAE ranges
from $75.14\%$ to $90.95\%$. Compared with the inconsistent effect of
timestep-aware dynamic weighting on $\varepsilon$-prediction, this result
indicates that target reparameterization is the primary source of utility
improvement among the evaluated objective components.

When applied on top of $v$-prediction, timestep-aware dynamic weighting
provides a further improvement on all five datasets under the main
matched DP-SGD configuration. Relative to unweighted $v$-prediction,
the relative reduction in mean MSE ranges from $8.69\%$ to $50.09\%$,
and the relative reduction in mean MAE ranges from $12.05\%$ to
$34.02\%$. Across the progressive objective ablation and controlled
decomposition, the two components show complementary effects:
$v$-prediction provides the main utility gain through target
reparameterization, while timestep-aware dynamic weighting adds further
utility improvement after reparameterization.

\subsubsection{Gradient Diagnostics and Mechanism Analysis}
\label{sec:gradient_diagnostics}

The preceding objective ablations establish the task-level roles of target reparameterization and timestep-aware dynamic weighting. We examine the corresponding pre-clipping gradient behavior in relation to the theoretical mechanism characterized in Section~\ref{sec:theory}. Figure~\ref{fig:gradient_diagnostics} presents an illustrative timestep-wise gradient diagnostic across diffusion timesteps. $\varepsilon$-prediction exhibits pronounced gradient growth
toward the late low-SNR region, while unweighted $v$-prediction shows
smaller gradient magnitudes in this region. The $v$-pred., dynamic curve
shows the strongest attenuation toward late low-SNR diffusion timesteps,
consistent with the stronger attenuation induced by
$\lambda_t=\bar{\alpha}_t$. The figure provides an illustrative
timestep-wise view, while the training-time gradient diagnostics are
reported in Table~\ref{tab:gradient_diagnostics}.

\begin{figure}[t]
\centering
\includegraphics[width=0.98\linewidth]{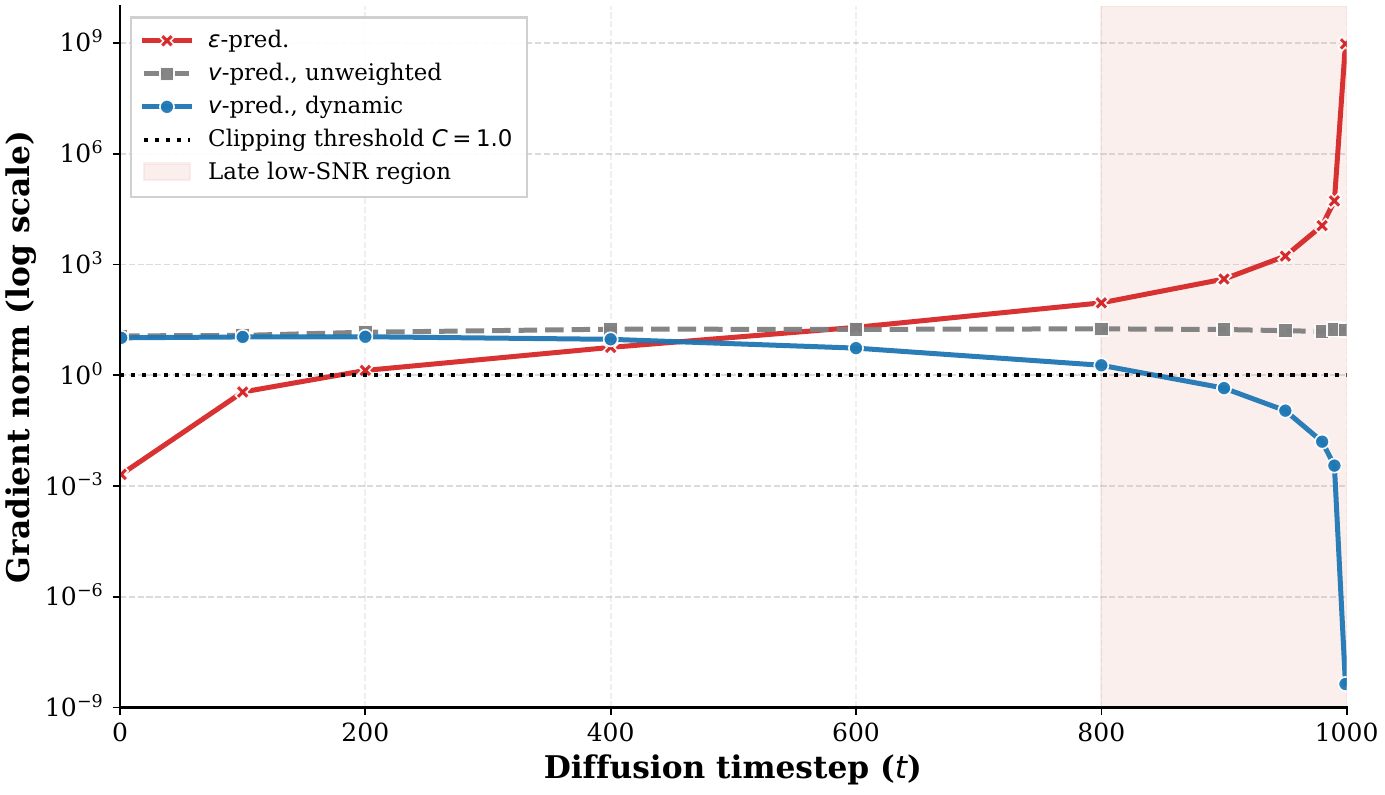}
\caption{
Illustrative timestep-wise gradient diagnostic across diffusion
timesteps. Curves are computed from fixed checkpoints using a synthetic
normalized batch to visualize objective-induced scaling trends.
The dashed line is the clipping threshold $C=1.0$, and the shaded
region marks late low-SNR diffusion timesteps. Training-time gradient
diagnostics are reported in Table~\ref{tab:gradient_diagnostics}.
}
\label{fig:gradient_diagnostics}
\end{figure}

Table~\ref{tab:gradient_diagnostics} reports training-time gradient
diagnostics computed from full-parameter gradient norms for individual
private examples before DP-SGD clipping during the final $20\%$ of
training iterations. Mean and Median summarize the central tendency of
the pre-clipping gradient-norm distribution, while Grad$_{p95}$ and
Grad$_{p99}$ characterize its upper tail. Overall Clipping Fraction
denotes the fraction of recorded gradients exceeding the fixed threshold
$C=1.0$. Relative to $\varepsilon$-prediction, unweighted
$v$-prediction lowers Mean, Median, Grad$_{p95}$, and Grad$_{p99}$ on
all three datasets. The reductions in Grad$_{p95}$ and Grad$_{p99}$
show that target reparameterization is associated with lower
high-percentile pre-clipping gradient norms. The overall clipping
fraction, however, remains 1.000 for unweighted $v$-prediction on all
three datasets.

\begin{table}[t]
\centering
\caption{
Training-time gradient diagnostics under the matched DP-SGD
configuration indexed by $(\epsilon,\delta)=(10,10^{-5})$.
Mean, Median, Grad$_{p95}$, and Grad$_{p99}$ are computed from
full-parameter gradient norms for individual private examples before
DP-SGD clipping during the final $20\%$ of training iterations.
Overall Clipping Fraction denotes the fraction of recorded gradients
exceeding the fixed clipping threshold $C=1.0$.
For the gradient-norm statistics, smaller values indicate lower
pre-clipping gradient magnitudes; for Overall Clipping Fraction,
smaller values indicate less frequent threshold exceedance.
The lowest value within each dataset is shown in bold.
}
\label{tab:gradient_diagnostics}
\small
\setlength{\tabcolsep}{5.0pt}
\renewcommand{\arraystretch}{1.08}

\resizebox{0.98\linewidth}{!}{
\begin{tabular}{llccccc}
\toprule
\textbf{Dataset} &
\textbf{Method} &
\textbf{Mean} &
\textbf{Median} &
\textbf{Grad$_{p95}$} &
\textbf{Grad$_{p99}$} &
\textbf{Overall Clipping Fraction} \\
\midrule

\multirow{3}{*}{ETTh1}
& $\varepsilon$-pred.
& 7.982 & 3.168 & 15.290 & 38.901 & 1.000 \\
& $v$-pred., unweighted
& 4.526 & 3.043 & 9.660 & 13.836 & 1.000 \\
& $v$-pred., dynamic
& \textbf{3.683}
& \textbf{1.943}
& \textbf{8.233}
& \textbf{12.730}
& \textbf{0.944} \\
\midrule

\multirow{3}{*}{ETTm1}
& $\varepsilon$-pred.
& 19.284 & 2.558 & 19.233 & 69.501 & 1.000 \\
& $v$-pred., unweighted
& 5.601 & 2.438 & 12.481 & 21.748 & 1.000 \\
& $v$-pred., dynamic
& \textbf{4.702}
& \textbf{2.136}
& \textbf{10.664}
& \textbf{18.361}
& \textbf{0.948} \\
\midrule

\multirow{3}{*}{Appliances}
& $\varepsilon$-pred.
& 17.040 & 2.989 & 20.847 & 128.333 & 1.000 \\
& $v$-pred., unweighted
& 16.512 & 2.498 & 17.182 & 127.709 & 1.000 \\
& $v$-pred., dynamic
& \textbf{14.575}
& \textbf{1.575}
& \textbf{14.675}
& \textbf{105.080}
& \textbf{0.917} \\
\bottomrule
\end{tabular}
}
\end{table}

Relative to unweighted $v$-prediction, $v$-pred., dynamic further
reduces Mean, Median, Grad$_{p95}$, and Grad$_{p99}$ on all three
datasets. It also reduces the overall clipping fraction from 1.000 to
0.944, 0.948, and 0.917 on ETTh1, ETTm1, and Appliances,
respectively. These training-time gradient diagnostics are consistent
with the clipping-relevant behavior characterized by
Proposition~\ref{prop:effective_boundary} and
Corollary~\ref{cor:tail_excess}: lower pre-clipping gradient norms correspond to a smaller portion of the
gradient distribution extending beyond the fixed clipping threshold. At the same time, the overall clipping fractions
remain above $91\%$, indicating that threshold exceedance remains
frequent despite the reduction.

To directly examine the deterministic timestep weighting described in
Proposition~\ref{prop:gradient_scaling}, we additionally use the
controlled gradient diagnostic. The trained $v$-prediction checkpoint,
inputs, masks, diffusion noise, and timestep realizations are held fixed,
while only the weighting rule is changed from $\lambda_t=1$ to
$\lambda_t=\bar{\alpha}_t$. Across ETTh1, ETTm1, and Appliances, the
median gradient attenuation is only about $0.8\%$--$1.1\%$ in the
$t=0$--$99$ bin, but increases to approximately $98.5\%$--$99.1\%$
in the $t=900$--$999$ bin. The progressively stronger timestep-selective gradient scaling toward late
low-SNR diffusion timesteps provides empirical support for the
timestep-dependent scaling behavior induced by deterministic timestep
weighting, as characterized by Proposition~\ref{prop:gradient_scaling}.
Detailed controlled gradient diagnostics are reported in ~\ref{app:controlled_gradient}.

The controlled gradient diagnostic shows the timestep-selective attenuation induced by deterministic timestep weighting. The training-time gradient diagnostics show lower high-percentile pre-clipping gradient norms and a lower overall clipping fraction for the proposed method. These measurements connect the mechanism characterized in Section~\ref{sec:theory} with the interpolation utility improvements reported in Sections~\ref{sec:main_results} and~\ref{sec:progressive_ablation}, and further characterize the different effects of target reparameterization and timestep-aware dynamic weighting.

\subsection{Robustness to Missing Severity}
\label{sec:missing_severity}

We evaluate the robustness of interpolation utility across different
missing-data severities. Under the same matched DP-SGD configuration
indexed by $(\epsilon,\delta)=(10,10^{-5})$, we vary the missing ratio
at test time from $10\%$ to $50\%$ while holding the trained checkpoints
and the remaining evaluation protocol fixed.

\begin{table}[t]
\centering
\caption{
Robustness to different missing severities under the same matched
DP-SGD configuration indexed by
$(\epsilon,\delta)=(10,10^{-5})$.
MAE is averaged over random point missingness, contiguous block
missingness, and persistent outage missingness.
Results are averaged over three random seeds
($42$, $43$, $44$). Lower values indicate better interpolation
performance, and the best result for each dataset and missing ratio is
shown in bold.
}
\label{tab:missing_severity}
\small
\setlength{\tabcolsep}{6pt}
\renewcommand{\arraystretch}{1.08}

\begin{tabular}{llcccc}
\toprule
\textbf{Dataset} &
\textbf{Method} &
\textbf{10\%} &
\textbf{20\%} &
\textbf{30\%} &
\textbf{50\%} \\
\midrule

\multirow{3}{*}{ETTh1}
& $\varepsilon$-pred.
& 24.68
& 28.50
& 25.55
& 25.72 \\

& $v$-pred., unweighted
& 3.15
& 3.52
& 3.60
& 4.00 \\

& $v$-pred., dynamic
& \textbf{2.35}
& \textbf{2.82}
& \textbf{3.13}
& \textbf{3.48} \\

\midrule

\multirow{3}{*}{ETTm1}
& $\varepsilon$-pred.
& 29.58
& 27.80
& 27.97
& 28.66 \\

& $v$-pred., unweighted
& 4.22
& 4.55
& 4.67
& 4.59 \\

& $v$-pred., dynamic
& \textbf{2.37}
& \textbf{3.11}
& \textbf{3.32}
& \textbf{3.11} \\

\midrule

\multirow{3}{*}{Electricity}
& $\varepsilon$-pred.
& 21.61
& 26.25
& 29.08
& 29.53 \\

& $v$-pred., unweighted
& 4.90
& 4.29
& 3.71
& 3.32 \\

& $v$-pred., dynamic
& \textbf{2.99}
& \textbf{3.07}
& \textbf{2.84}
& \textbf{3.09} \\

\midrule

\multirow{3}{*}{Household}
& $\varepsilon$-pred.
& 6.54
& 6.16
& 5.59
& 6.18 \\

& $v$-pred., unweighted
& 0.43
& 0.59
& 0.59
& 0.61 \\

& $v$-pred., dynamic
& \textbf{0.36}
& \textbf{0.46}
& \textbf{0.42}
& \textbf{0.43} \\

\midrule

\multirow{3}{*}{Appliances}
& $\varepsilon$-pred.
& 453.00
& 489.12
& 559.89
& 550.69 \\

& $v$-pred., unweighted
& 110.00
& 130.29
& 127.35
& 142.62 \\

& $v$-pred., dynamic
& \textbf{104.98}
& \textbf{115.72}
& \textbf{107.61}
& \textbf{120.44} \\

\bottomrule
\end{tabular}
\end{table}

Table~\ref{tab:missing_severity} summarizes interpolation performance
across the four evaluated missing ratios. Across all five datasets,
$v$-pred., dynamic achieves the lowest average MAE in all
20 dataset--severity combinations. Its MAE advantage over both
$\varepsilon$-prediction and unweighted $v$-prediction is maintained
throughout the evaluated range from $10\%$ to $50\%$ missingness.

Unweighted $v$-prediction uses the same prediction target and differs only
in timestep-aware dynamic weighting, providing a direct control for the
weighting effect. The lower average MAE of $v$-pred., dynamic at every
evaluated missing ratio indicates that the benefit of timestep-aware
dynamic weighting is maintained across different missing severities. The
pattern-specific results in Section~\ref{sec:main_results} further show
that the interpolation improvements are observed across different
missingness structures.

\subsection{Extension to Suffix-Missing Forecasting}
\label{sec:suffix_forecasting}

The preceding experiments focus on masked-value interpolation. We extend the evaluation to suffix-missing forecasting within the same conditional diffusion model. For each input window, the observed
prefix is retained as context and the final $H$ missing timesteps are treated as
the prediction target, with forecast horizons
$H\in\{24,48,72\}$. For each objective variant, we reuse its
interpolation-trained checkpoint and change only the evaluation mask for
suffix-missing forecasting. All compared variants use the same matched
DP-SGD configuration indexed by
$(\epsilon,\delta)=(10,10^{-5})$, and forecasting errors are computed
only over the forecast horizon.

\begin{table}[t]
\centering
\caption{
Suffix-missing forecasting performance under different forecast horizons.
All methods use the same matched DP-SGD configuration indexed by
$(\epsilon,\delta)=(10,10^{-5})$.
MSE and MAE are computed only over the forecast horizon.
Results are averaged over three random seeds ($42$, $43$, $44$).
Lower values indicate better forecasting performance, and the best
result for each dataset, forecast horizon, and metric is shown in bold.
}
\label{tab:suffix_forecasting}
\small
\setlength{\tabcolsep}{3.8pt}
\renewcommand{\arraystretch}{1.08}

\resizebox{0.98\linewidth}{!}{
\begin{tabular}{llcccccc}
\toprule
\multirow{2}{*}{\textbf{Dataset}} &
\multirow{2}{*}{\textbf{Method}} &
\multicolumn{3}{c}{\textbf{MSE}} &
\multicolumn{3}{c}{\textbf{MAE}} \\
\cmidrule(lr){3-5}
\cmidrule(lr){6-8}
& &
\textbf{$H=24$} &
\textbf{$H=48$} &
\textbf{$H=72$} &
\textbf{$H=24$} &
\textbf{$H=48$} &
\textbf{$H=72$} \\
\midrule

\multirow{3}{*}{ETTh1}
& $\varepsilon$-pred.
& 785.81 & 981.47 & 885.18
& 22.12 & 25.65 & 23.41 \\
& $v$-pred., unweighted
& 26.02 & 26.28 & 45.55
& 4.17 & 4.20 & 5.97 \\
& $v$-pred., dynamic
& \textbf{17.37} & \textbf{22.80} & \textbf{26.58}
& \textbf{3.77} & \textbf{3.91} & \textbf{4.39} \\
\midrule

\multirow{3}{*}{ETTm1}
& $\varepsilon$-pred.
& 1005.67 & 1033.41 & 1060.82
& 26.55 & 27.25 & 27.91 \\
& $v$-pred., unweighted
& 53.06 & 53.41 & 36.31
& 6.67 & 6.52 & 5.16 \\
& $v$-pred., dynamic
& \textbf{18.98} & \textbf{37.46} & \textbf{33.12}
& \textbf{3.74} & \textbf{5.40} & \textbf{4.99} \\
\midrule

\multirow{3}{*}{Electricity}
& $\varepsilon$-pred.
& 1932.04 & 2294.60 & 2174.27
& 25.60 & 30.33 & 28.75 \\
& $v$-pred., unweighted
& 34.36 & 30.78 & 20.57
& 3.70 & 3.15 & 2.64 \\
& $v$-pred., dynamic
& \textbf{14.94} & \textbf{16.06} & \textbf{20.16}
& \textbf{2.02} & \textbf{2.17} & \textbf{2.53} \\
\midrule

\multirow{3}{*}{Household}
& $\varepsilon$-pred.
& 50.59 & 58.48 & 61.87
& 5.12 & 5.74 & 6.00 \\
& $v$-pred., unweighted
& 0.75 & 0.89 & 0.58
& 0.67 & 0.73 & 0.59 \\
& $v$-pred., dynamic
& \textbf{0.15} & \textbf{0.31} & \textbf{0.30}
& \textbf{0.33} & \textbf{0.41} & \textbf{0.40} \\
\midrule

\multirow{3}{*}{Appliances}
& $\varepsilon$-pred.
& 517675.00 & 560839.58 & 484640.28
& 534.17 & 603.96 & 550.42 \\
& $v$-pred., unweighted
& 10288.46 & 26108.86 & 31083.84
& 82.25 & 109.49 & 121.01 \\
& $v$-pred., dynamic
& \textbf{5526.13} & \textbf{17455.46} & \textbf{25675.11}
& \textbf{56.91} & \textbf{81.86} & \textbf{98.94} \\
\bottomrule
\end{tabular}
}
\end{table}

Table~\ref{tab:suffix_forecasting} shows that both $v$-prediction
variants outperform $\varepsilon$-prediction in MSE and MAE across all
five datasets and all three forecast horizons. Thus, the utility improvement associated with target reparameterization
is maintained when the same conditional diffusion model is evaluated under
suffix-missing forecasting. This result is consistent with the controlled
target--weighting decomposition in
Section~\ref{sec:target_weighting_decomposition}, which characterizes the
contribution of target reparameterization and timestep-aware dynamic
weighting among the evaluated objective components.

Timestep-aware dynamic weighting provides a further improvement over
unweighted $v$-prediction under the main matched DP-SGD configuration.
The $v$-pred., dynamic variant achieves the lowest MSE and MAE in all
15 dataset--horizon combinations at $H=24$, $48$, and $72$. These results indicate that the utility benefit of timestep-aware dynamic
weighting observed in interpolation is maintained in the evaluated
suffix-missing forecasting setting. This extension uses the same
interpolation-trained checkpoints, conditional diffusion backbone, and
objective design, while adapting the evaluation mask to the
suffix-missing forecasting setting.

\subsection{Baseline Comparisons and Fairness Analysis}
\label{sec:baseline_comparisons}

\subsubsection{Comparison with Representative Time-Series Baselines}
\label{sec:representative_baselines}

We further compare the proposed timestep-aware objective-conditioning
method with representative time-series architectures under matched
DP-SGD configurations. Following the random-mask imputation benchmark
adopted in TimeDiT \citep{cao2025timedit}, we evaluate PatchTST, TimesNet,
iTransformer, and the TimeDiT $\varepsilon$-prediction baseline under
$50\%$ random point missingness on ETTh1, ETTm1, and Electricity. Results
are reported for the matched DP-SGD configurations indexed by
$\epsilon=8$ and $\epsilon=10$. Within each configuration, all compared
methods use the same preprocessing, normalization, masking, and evaluation
protocol. The non-diffusion baselines are adapted to the masked-value
interpolation setting using the same observed context and evaluation
protocol.

\begin{table*}[t]
\centering
\caption{
Comparison with representative time-series baselines under matched
DP-SGD interpolation settings at $50\%$ random point missingness.
Results are reported under two matched DP-SGD configurations indexed by
$\epsilon=8$ and $\epsilon=10$. Within each matched DP-SGD configuration,
all compared methods use the same preprocessing, normalization, masking,
and evaluation protocol.
Here, $\epsilon$ indexes the matched DP-SGD configuration,
TimeDiT ($\varepsilon$-pred.) denotes the diffusion baseline with
$\varepsilon$-prediction, and Ours denotes the proposed method with
$v$-prediction and timestep-aware dynamic weighting.
Lower MSE and MAE are better.
Results are averaged over three random seeds ($42$, $43$, $44$).
}
\label{tab:external_baseline}
\small
\setlength{\tabcolsep}{5.5pt}
\renewcommand{\arraystretch}{1.08}

\resizebox{0.98\linewidth}{!}{
\begin{tabular}{ll ccc ccc}
\toprule
\multirow{2}{*}{\textbf{Metric}} &
\multirow{2}{*}{\textbf{Model}} &
\multicolumn{3}{c}{$\epsilon=8$} &
\multicolumn{3}{c}{$\epsilon=10$} \\
\cmidrule(lr){3-5}
\cmidrule(lr){6-8}
& &
\textbf{ETTh1} &
\textbf{ETTm1} &
\textbf{Electricity} &
\textbf{ETTh1} &
\textbf{ETTm1} &
\textbf{Electricity} \\
\midrule

\multirow{5}{*}{\textbf{MSE}}
& PatchTST
& 3349.51 & 1965.84 & 2514.88
& 3348.61 & 1963.74 & 2516.32 \\

& TimesNet
& 2271.85 & 1128.46 & 3213.12
& 2273.45 & 1130.65 & 3214.54 \\

& iTransformer
& 4344.20 & 2966.35 & 3458.10
& 4341.34 & 2968.12 & 3457.21 \\

& TimeDiT ($\varepsilon$-pred.)
& 1216.15 & 958.36 & 2597.01
& 1181.36 & 1173.89 & 2415.27 \\

& \textbf{Ours ($v$-pred., dynamic)}
& \textbf{16.08} & \textbf{36.58} & \textbf{17.99}
& \textbf{9.12} & \textbf{8.83} & \textbf{32.88} \\

\midrule

\multirow{5}{*}{\textbf{MAE}}
& PatchTST
& 49.38 & 37.76 & 33.40
& 49.36 & 37.78 & 33.42 \\

& TimesNet
& 41.22 & 28.84 & 38.35
& 41.24 & 28.86 & 38.37 \\

& iTransformer
& 59.51 & 49.82 & 40.91
& 59.53 & 49.84 & 40.93 \\

& TimeDiT ($\varepsilon$-pred.)
& 30.30 & 25.85 & 34.28
& 29.61 & 30.14 & 31.90 \\

& \textbf{Ours ($v$-pred., dynamic)}
& \textbf{3.07} & \textbf{4.90} & \textbf{2.29}
& \textbf{2.36} & \textbf{1.56} & \textbf{3.37} \\

\bottomrule
\end{tabular}
}
\end{table*}

As shown in Table~\ref{tab:external_baseline}, the proposed method
achieves the lowest MSE and MAE on all three datasets under both matched
DP-SGD configurations, covering all six dataset--configuration
combinations. The advantage is observed over the TimeDiT
$\varepsilon$-prediction baseline as well as PatchTST, TimesNet, and
iTransformer.

This comparison extends the empirical evaluation beyond the
within-backbone objective comparisons in Sections~\ref{sec:main_results}
and~\ref{sec:objective_ablation}. Under the matched DP-SGD
configurations and shared evaluation protocol, the proposed method
achieves consistently lower interpolation error across all evaluated
datasets and configurations. This experiment provides an external performance comparison across
different model architectures. The contribution of target reparameterization and
timestep-aware dynamic weighting is analyzed through controlled
objective comparisons in Section~\ref{sec:objective_ablation}.

\subsubsection{Tuned $\varepsilon$-Prediction Baseline}
\label{sec:tuned_epsilon}

We tune the $\varepsilon$-prediction baseline on ETTh1, ETTm1, and Appliances to assess the effect of baseline tuning on the performance gap to $v$-prediction. The clipping threshold $C$,
learning rate, and checkpoint are selected according to validation loss. The resulting configurations are
shown in Table~\ref{tab:tuned_epsilon}~(a). The selected models are
evaluated under the main matched DP-SGD configuration indexed by
$(\epsilon,\delta)=(10,10^{-5})$ using the same evaluation protocol
as the main experiments.

\begin{table*}[t]
\centering
\caption{
Validation tuning and evaluation of the $\varepsilon$-prediction
baseline.
(a) Validation-selected clipping threshold, learning rate, and
checkpoint for each dataset.
(b) Comparison of the tuned $\varepsilon$-prediction baseline with the
original $\varepsilon$-prediction baseline and unweighted
$v$-prediction using the same evaluation protocol as the main
experiments.
For each missingness pattern, results are averaged over missing ratios
of $10\%$, $20\%$, $30\%$, and $50\%$; for suffix-missing forecasting,
results are averaged over $H\in\{24,48,72\}$.
Lower MAE and RMSE are better, and the best result for each
dataset--condition and metric is shown in bold.
Results in panel (b) are averaged over three random seeds
($42$, $43$, $44$).
}
\label{tab:tuned_epsilon}
\small
\renewcommand{\arraystretch}{1.08}

\textbf{(a) Validation-selected configurations for the tuned
$\varepsilon$-prediction baseline.}

\vspace{3pt}

\setlength{\tabcolsep}{5.0pt}

\resizebox{0.98\linewidth}{!}{
\begin{tabular}{lcccc}
\toprule
\textbf{Dataset} &
\textbf{Selected $C$} &
\textbf{Selected Learning Rate} &
\textbf{Best Validation Loss} &
\textbf{Selected Checkpoint Step} \\
\midrule

ETTh1
& 1
& $3\times10^{-5}$
& 0.203965
& 87000 \\

ETTm1
& 1
& $1\times10^{-4}$
& 0.217985
& 99000 \\

Appliances
& 1
& $3\times10^{-5}$
& 0.180863
& 87000 \\

\bottomrule
\end{tabular}
}

\vspace{8pt}

\textbf{(b) Full comparison with the validation-tuned
$\varepsilon$-prediction baseline.}

\vspace{3pt}

\setlength{\tabcolsep}{4.0pt}

\resizebox{0.98\linewidth}{!}{
\begin{tabular}{ll ccc ccc}
\toprule
\multirow{2}{*}{\textbf{Dataset}} &
\multirow{2}{*}{\textbf{Condition}} &
\multicolumn{3}{c}{\textbf{MAE}} &
\multicolumn{3}{c}{\textbf{RMSE}} \\
\cmidrule(lr){3-5}
\cmidrule(lr){6-8}
& &
\textbf{Tuned $\varepsilon$} &
\textbf{Original $\varepsilon$} &
\textbf{$v$-pred., unweighted} &
\textbf{Tuned $\varepsilon$} &
\textbf{Original $\varepsilon$} &
\textbf{$v$-pred., unweighted} \\
\midrule

\multirow{4}{*}{ETTh1}
& Random Point
& 25.53 & 29.42 & \textbf{3.15}
& 28.74 & 34.16 & \textbf{3.82} \\

& Contiguous Block
& 22.75 & 24.32 & \textbf{4.28}
& 24.23 & 30.79 & \textbf{5.23} \\

& Persistent Outage
& 22.54 & 24.59 & \textbf{3.27}
& 23.73 & 30.04 & \textbf{4.27} \\

& Suffix-Missing Forecasting
& 22.39 & 23.73 & \textbf{4.78}
& 23.71 & 29.70 & \textbf{5.66} \\

\midrule

\multirow{4}{*}{ETTm1}
& Random Point
& 18.86 & 29.65 & \textbf{2.42}
& 24.62 & 33.80 & \textbf{3.03} \\

& Contiguous Block
& 18.63 & 30.86 & \textbf{5.07}
& 24.18 & 34.61 & \textbf{5.98} \\

& Persistent Outage
& 18.47 & 24.99 & \textbf{6.03}
& 24.32 & 30.35 & \textbf{6.89} \\

& Suffix-Missing Forecasting
& 19.08 & 27.23 & \textbf{6.12}
& 24.92 & 32.14 & \textbf{6.87} \\

\midrule

\multirow{4}{*}{Appliances}
& Random Point
& 564.35 & 603.57 & \textbf{159.78}
& 685.49 & 714.83 & \textbf{230.52} \\

& Contiguous Block
& 416.05 & 467.22 & \textbf{154.93}
& 485.03 & 577.96 & \textbf{215.43} \\

& Persistent Outage
& 457.20 & 468.73 & \textbf{67.98}
& 557.42 & 663.37 & \textbf{94.07} \\

& Suffix-Missing Forecasting
& 471.04 & 562.85 & \textbf{104.25}
& 558.76 & 721.52 & \textbf{146.44} \\

\bottomrule
\end{tabular}
}
\end{table*}

Validation-based tuning consistently strengthens the
$\varepsilon$-prediction baseline. As shown in
Table~\ref{tab:tuned_epsilon}~(b), tuned $\varepsilon$-prediction
reduces both MAE and RMSE relative to the original
$\varepsilon$-prediction baseline in all 12 reported
dataset--condition comparisons. Unweighted $v$-prediction retains the lowest MAE and RMSE in every comparison across random point missingness, contiguous block missingness, persistent outage missingness, and suffix-missing forecasting. 

This comparison further examines the contribution of target
reparameterization by comparing unweighted $v$-prediction with a
validation-tuned $\varepsilon$-prediction baseline. The remaining performance gap after validation-based tuning,
together with the controlled target--weighting decomposition in
Section~\ref{sec:target_weighting_decomposition}, shows that the
advantage of target reparameterization over standard
$\varepsilon$-prediction is maintained under stronger baseline tuning.

\section{Conclusion}

We proposed a timestep-aware objective-conditioning method for
differentially private diffusion-based energy time-series interpolation.
The method addresses the objective--clipping interaction that becomes
pronounced at late low-SNR diffusion timesteps, where standard
$\varepsilon$-prediction can produce large gradients before DP-SGD
clipping. Starting from $\varepsilon$-prediction, the objective is
progressively reformulated through $v$-prediction as target
reparameterization, static loss weighting as a uniform-scaling control,
and timestep-aware dynamic weighting that applies stronger objective
attenuation at late low-SNR diffusion timesteps while leaving the
DP-SGD mechanism unchanged.

The controlled decomposition of target reparameterization and weighting
shows that target reparameterization is the primary source of the utility
improvement, while timestep-aware dynamic weighting provides an additional
gain under the main matched DP-SGD configuration. The gradient
diagnostics further support the mechanism characterized in
Section~\ref{sec:theory}. The controlled gradient diagnostic shows
progressively stronger gradient attenuation toward late low-SNR
diffusion timesteps, consistent with the scaling induced by deterministic
timestep weighting, while the training-time gradient diagnostics show
lower high-percentile pre-clipping gradient norms and a lower overall
clipping fraction for the proposed method.

Experiments on five real-world energy time-series datasets show that the
proposed method improves over standard $\varepsilon$-prediction across
random point missingness, contiguous block missingness, and persistent
outage missingness under the main matched DP-SGD configuration. These
improvements are also retained across the evaluated missing-data severities
and in suffix-missing forecasting. Comparisons with representative
time-series baselines and a validation-tuned $\varepsilon$-prediction
baseline further demonstrate the robustness of the observed utility
improvements under matched DP-SGD configurations.
Overall, the results show that clipping-aware objective conditioning can
improve both clipping behavior and interpolation utility under
fixed-threshold DP-SGD, supporting practical energy time-series recovery
under privacy constraints.

\section{Limitations}

The current evaluation uses a cosine diffusion schedule, a fixed diffusion
backbone, and the matched DP-SGD configurations considered in this study.
The incremental gain from timestep-aware dynamic weighting varies with the
training configuration and task setting. Future evaluation should consider
additional diffusion backbones and schedules, alternative private optimization
settings, broader timestep-aware weighting strategies, and dedicated
time-series imputation baselines. The current study adopts a window-level privacy unit, where non-overlapping windows are treated as independent training records.

%% The Appendices part is started with the command \appendix;
%% appendix sections are then done as normal sections
%%%%%%%%%%%%%%%%%%%%%%%%%%%%%%%%%%%%%%%%%%%%%%%%%%%%%%%%%%%%
\appendix

\section{Roadmap}
\label{app:roadmap}

This appendix provides additional details that support the main text
without introducing new claims.

The following appendices report implementation and evaluation details,
including preprocessing, missingness construction, interpolation, and
suffix-missing forecasting protocols; seed variability; external-baseline
and validation-tuned $\varepsilon$-prediction comparison details; full
interpolation results; static loss weighting and timestep-aware dynamic
weighting comparisons; additional proof details; sensitivity to matched
DP-SGD configurations; and the controlled gradient diagnostic.

\section{Implementation Details}
\label{app:impl}

This appendix provides the implementation and training details used in the
core objective comparisons.

Within each dataset, the data are first split chronologically into
$70\%/10\%/20\%$ training, validation, and test subsets. Dataset-specific
missing-value processing is performed independently within each subset
before normalization and window construction. Min--max normalization is
applied per channel using statistics computed from the training subset only;
the resulting parameters are fixed and reused for validation and test.
The normalized series are then partitioned into non-overlapping fixed-length
windows, each serving as an independent training record for DP-SGD. ETTh1,
ETTm1, Household, and Appliances use a window length and stride of 96, whereas
Electricity uses a window length and stride of 192.

For Electricity, we retain the global one-channel construction and use
\texttt{MT\_369} as the fixed evaluation sensor. For Household, the target
variable is \texttt{Global\_active\_power}, retained at the original
one-minute resolution. After the chronological split, interior missing values
are linearly interpolated independently within each subset, while leading and
trailing missing values are removed. Linearly interpolated values are excluded
from the evaluation targets. For Appliances, we use \texttt{Appliances} at
the original ten-minute resolution. No additional resampling is applied to
Household or Appliances.

The diffusion backbone is a DiT-style Transformer conditioned through
AdaLN-Zero. Its architectural configuration is summarized in
Table~\ref{tab:backbone}. The same backbone, optimizer, training masking
protocol, and diffusion training configuration are used across all core
objective variants.
The controlled comparisons differ only in the prediction target and/or loss
weighting.

For each training run, checkpoint selection follows the same
validation-based rule, and the checkpoint with the best validation
performance is used for evaluation. Predictions are inverse-transformed
using the fixed training-derived normalization parameters before computing
MSE, MAE, and RMSE. Within each dataset and evaluation condition, all
compared objective variants use the same data scale, target positions, and
evaluation protocol.

For interpolation, random point missingness, contiguous block missingness,
and persistent outage missingness are applied at evaluation time with
missing ratios of $10\%$, $20\%$, $30\%$, and $50\%$. Artificial missingness
is applied only to originally observed positions, and interpolation metrics
are computed only over the resulting masked targets. Suffix-missing
forecasting uses the same interpolation-trained checkpoints, with the final
$H\in\{24,48,72\}$ positions treated as the prediction target and metrics
computed only over the forecast horizon. No separate forecasting
architecture or forecasting-specific training objective is introduced.

For each evaluation condition, 30 diffusion samples are generated using
DDIM with 50 inference steps. The sample median is used as the point
prediction. Predictions are clamped to $[0,1]$ in normalized space before
inverse transformation.

For stochasticity control, the main experiments are repeated with three
random seeds, 42, 43, and 44, and the reported results are averaged across
the three runs. The same seed set is used across matched objective variants
to control initialization, DP noise, data shuffling, diffusion timestep
sampling, and mask sampling consistently within each comparison.

All experiments were conducted on a high-performance workstation equipped
with NVIDIA A100 80GB PCIe GPUs and 512GB of RAM. The software environment
used Python 3.12.4, PyTorch 2.9.0, and CUDA 12.8.

\begin{table}[H]
\centering
\caption{Hardware and software environment used for all experiments.}
\label{tab:env}
\begin{tabular}{lc}
\toprule
\textbf{Component} & \textbf{Configuration} \\
\midrule
GPU & NVIDIA A100 80GB PCIe \\
System Memory & 512GB RAM \\
Python Version & 3.12.4 \\
Deep Learning Framework & PyTorch 2.9.0 \\
CUDA Version & 12.8 \\
\bottomrule
\end{tabular}
\end{table}

\begin{table}[H]
\centering
\caption{Diffusion-backbone configuration used in the core objective comparisons.}
\label{tab:backbone}

\small
\setlength{\tabcolsep}{5pt}
\renewcommand{\arraystretch}{1.08}

\begin{tabularx}{\linewidth}{
@{}
>{\raggedright\arraybackslash}p{0.30\linewidth}
>{\centering\arraybackslash}X
@{}
}
\toprule
\textbf{Component} & \textbf{Configuration} \\
\midrule

Backbone
& DiT-style Transformer \\

Number of DiT Blocks
& 8 \\

Hidden Dimension
& 256 \\

Self-Attention
& 8 heads \\

Timestep Embedding
& 256-d sinusoidal embedding + MLP \\

Conditioning Mechanism
& AdaLN-Zero \\

Condition Signals
& Timestep/task embeddings, observed-region statistics,
and pooled token representation \\

Trainable Parameters
& $\sim 7.5$M \\

\bottomrule
\end{tabularx}
\end{table}

\begin{table}[H]
\centering
\caption{
Training configuration used for the core objective comparisons.
The main experiments use the matched DP-SGD configuration indexed by
$(\epsilon,\delta)=(10,10^{-5})$.
All core objective variants use the same sampling scheme, clipping
threshold, noise multiplier, and number of optimization steps.
}
\label{tab:hyperparams}

\small
\setlength{\tabcolsep}{5pt}
\renewcommand{\arraystretch}{1.08}

\begin{tabular}{lc}
\toprule
\textbf{Hyperparameter} & \textbf{Value} \\
\midrule

Learning Rate
& $10^{-4}$ \\

Batch Size
& \shortstack{32 (ETTh1, ETTm1, Household, Appliances) \\
16 (Electricity)} \\

Training Steps
& 100,000 \\

Optimizer
& AdamW \\

Diffusion Steps ($T$)
& 1000 \\

Clipping Threshold ($C$)
& 1.0 \\

Noise Multiplier ($\sigma$)
& 1.5 \\

Matched DP-SGD Configuration
& $(\epsilon,\delta)=(10,10^{-5})$ \\

\bottomrule
\end{tabular}
\end{table}

\section{Seed Variability}
\label{app:seed}

The main experimental tables report averages over three random seeds
(42, 43, and 44). To avoid cluttering the main text,
Table~\ref{tab:seed_range} reports the corresponding min--max ranges
across seeds for the progressive objective ablation under $50\%$
missingness.

We use the same datasets, objective variants, and missingness patterns
as in Table~\ref{tab:progressive_ablation}. Each entry reports the MSE
min--max range and the MAE min--max range across the three seeds. The
corresponding seed averages are reported in the main-text ablation table.

\begin{table}[H]
\centering
\caption{
Seed variability for the progressive objective ablation under $50\%$
missingness. The MSE and MAE rows report the corresponding min--max
ranges across three random seeds (42, 43, and 44).
The main-text ablation table reports the corresponding seed averages.
}
\label{tab:seed_range}

\scriptsize
\setlength{\tabcolsep}{3.0pt}
\renewcommand{\arraystretch}{1.10}

\resizebox{0.98\linewidth}{!}{
\begin{tabular}{@{}lllccc@{}}
\toprule
\textbf{Dataset} &
\textbf{Method} &
\textbf{Metric} &
\textbf{Random Point} &
\textbf{Contiguous Block} &
\textbf{Persistent Outage} \\
\midrule

% =========================
% ETTh1
% =========================
\multirow{10}{*}{ETTh1}
& \multirow{2}{*}{$\varepsilon$-pred.}
& MSE
& 1117.24--1235.33
& 826.55--917.09
& 886.24--976.69 \\
&
& MAE
& 28.39--30.38
& 21.61--23.61
& 23.90--25.95 \\

&
\multirow{2}{*}{$v$-pred., unweighted}
& MSE
& 19.80--23.50
& 26.65--31.90
& 20.00--23.51 \\
&
& MAE
& 3.46--3.84
& 4.22--4.94
& 3.50--3.90 \\

&
\multirow{2}{*}{$v$-pred., static ($\lambda=0.6$)}
& MSE
& 18.40--21.46
& 44.50--53.00
& 43.50--50.13 \\
&
& MAE
& 3.50--3.87
& 5.60--6.45
& 5.40--6.23 \\

&
\multirow{2}{*}{$v$-pred., static ($\lambda=0.9$)}
& MSE
& 22.80--26.71
& 32.80--38.44
& 40.20--47.31 \\
&
& MAE
& 3.70--4.26
& 4.65--5.26
& 5.20--6.06 \\

&
\multirow{2}{*}{$v$-pred., dynamic}
& MSE
& 8.72--9.48
& 25.50--29.43
& 18.50--21.84 \\
&
& MAE
& 2.21--2.49
& 4.20--4.79
& 3.40--3.83 \\

\midrule

% =========================
% ETTm1
% =========================
\multirow{10}{*}{ETTm1}
& \multirow{2}{*}{$\varepsilon$-pred.}
& MSE
& 1102.00--1217.67
& 996.00--1123.67
& 1008.00--1120.11 \\
&
& MAE
& 28.30--31.62
& 26.30--28.98
& 26.30--29.14 \\

&
\multirow{2}{*}{$v$-pred., unweighted}
& MSE
& 9.80--11.76
& 27.30--31.58
& 49.50--58.87 \\
&
& MAE
& 2.35--2.74
& 4.20--4.95
& 5.90--6.95 \\

&
\multirow{2}{*}{$v$-pred., static ($\lambda=0.6$)}
& MSE
& 10.80--12.79
& 22.20--26.15
& 38.40--46.23 \\
&
& MAE
& 2.50--3.04
& 3.30--3.93
& 5.30--6.20 \\

&
\multirow{2}{*}{$v$-pred., static ($\lambda=0.9$)}
& MSE
& 14.30--16.79
& 19.90--23.85
& 31.80--37.67 \\
&
& MAE
& 2.80--3.32
& 3.30--3.98
& 4.50--5.41 \\

&
\multirow{2}{*}{$v$-pred., dynamic}
& MSE
& 8.10--9.59
& 17.20--20.02
& 26.80--31.55 \\
&
& MAE
& 1.40--1.70
& 2.90--3.43
& 4.10--4.88 \\

\midrule

% =========================
% Appliances
% =========================
\multirow{10}{*}{Appliances}
& \multirow{2}{*}{$\varepsilon$-pred.}
& MSE
& 392000.00--439437.50
& 453000.00--507868.75
& 475000.00--532331.26 \\
&
& MAE
& 500.00--552.00
& 545.00--599.63
& 525.00--579.88 \\

&
\multirow{2}{*}{$v$-pred., unweighted}
& MSE
& 61200.00--70276.63
& 39000.00--45185.95
& 24000.00--27696.15 \\
&
& MAE
& 166.00--193.55
& 132.00--149.02
& 99.00--114.91 \\

&
\multirow{2}{*}{$v$-pred., static ($\lambda=0.6$)}
& MSE
& 54800.00--63026.02
& 39200.00--44493.49
& 19500.00--22931.38 \\
&
& MAE
& 155.00--178.18
& 128.00--145.58
& 79.00--90.74 \\

&
\multirow{2}{*}{$v$-pred., static ($\lambda=0.9$)}
& MSE
& 53500.00--61757.10
& 36800.00--42510.84
& 23400.00--26989.22 \\
&
& MAE
& 159.00--182.71
& 129.00--145.37
& 110.00--125.02 \\

&
\multirow{2}{*}{$v$-pred., dynamic}
& MSE
& 49800.00--57577.33
& 35500.00--40405.52
& 16500.00--19390.37 \\
&
& MAE
& 142.00--164.64
& 119.00--134.40
& 75.00--86.95 \\

\bottomrule
\end{tabular}
}
\end{table}

\section{Baseline Comparison Details}
\label{app:baseline_details}

The representative baseline comparison is conducted on ETTh1, ETTm1,
and Electricity under $50\%$ random point missingness. Following the
comparison setting adopted in TimeDiT \citep{cao2025timedit}, we consider
PatchTST, TimesNet, iTransformer, and the TimeDiT
$\varepsilon$-prediction baseline.

Results are reported under the matched DP-SGD configurations indexed by
$\epsilon=8$ and $\epsilon=10$. Within each configuration, the compared
methods use the same preprocessing, normalization, masking, and evaluation
protocol. This provides a common training and evaluation protocol for
comparing the representative time-series baselines with the proposed method.

For the validation-tuned $\varepsilon$-prediction baseline, we perform
a local sweep over the clipping threshold and learning rate on ETTh1,
ETTm1, and Appliances under the main matched DP-SGD configuration
indexed by $(\epsilon,\delta)=(10,10^{-5})$. The search space is
\[
C\in\{0.1,0.3,1,3,10\},
\qquad
\mathrm{LR}\in\{3\times10^{-5},10^{-4},3\times10^{-4}\}.
\]
The clipping threshold, learning rate, and checkpoint are selected using
validation loss only, without reference to the test results. The selected
models are then evaluated using the same evaluation protocol as
reported in Section~\ref{sec:tuned_epsilon}.

\section{Full Interpolation Results}
\label{app:full}

Table~\ref{tab:extended_performance} reports the full progressive
objective ablation across all five datasets under $50\%$ missingness.

Across all five datasets and three missingness patterns, replacing
$\varepsilon$-prediction with $v$-prediction substantially improves
interpolation utility. Static loss weighting provides additional
improvement in some settings, while timestep-aware dynamic weighting
provides a further improvement and achieves the lowest MSE and MAE in
all 15 dataset--pattern combinations under the main matched DP-SGD
configuration.

\begin{table}[H]
\centering
\caption{
Full progressive objective ablation under $50\%$ missingness across all
five datasets. All variants use the main matched DP-SGD configuration
indexed by $(\epsilon,\delta)=(10,10^{-5})$.
MSE and MAE are reported for Random Point, Contiguous Block, and
Persistent Outage missingness.
Bold marks the best result among the evaluated progressive objective
variants for each dataset, missingness pattern, and metric.
Results are averaged over three random seeds (42, 43, and 44).
}
\label{tab:extended_performance}

\scriptsize
\setlength{\tabcolsep}{2.3pt}
\renewcommand{\arraystretch}{1.12}

\resizebox{0.98\linewidth}{!}{
\begin{tabular}{ll cc cc cc}
\toprule
\multirow{2}{*}{\textbf{Dataset}} &
\multirow{2}{*}{\textbf{Method}} &
\multicolumn{2}{c}{\textbf{Random Point}} &
\multicolumn{2}{c}{\textbf{Contiguous Block}} &
\multicolumn{2}{c}{\textbf{Persistent Outage}} \\
\cmidrule(lr){3-4}
\cmidrule(lr){5-6}
\cmidrule(lr){7-8}
& &
\textbf{MSE} & \textbf{MAE} &
\textbf{MSE} & \textbf{MAE} &
\textbf{MSE} & \textbf{MAE} \\
\midrule

\multirow{5}{*}{ETTh1}
& $\varepsilon$-pred.
& 1181.36 & 29.61
& 867.77 & 22.71
& 941.54 & 24.85 \\

& $v$-pred., unweighted
& 21.90 & 3.67
& 29.80 & 4.57
& 21.81 & 3.75 \\

& $v$-pred., static ($\lambda=0.6$)
& 19.86 & 3.69
& 49.30 & 6.13
& 47.03 & 5.83 \\

& $v$-pred., static ($\lambda=0.9$)
& 25.07 & 3.96
& 35.68 & 5.03
& 44.37 & 5.72 \\

& $v$-pred., dynamic
& \textbf{9.12} & \textbf{2.36}
& \textbf{27.91} & \textbf{4.47}
& \textbf{20.28} & \textbf{3.62} \\

\midrule

\multirow{5}{*}{ETTm1}
& $\varepsilon$-pred.
& 1173.89 & 30.14
& 1051.89 & 27.86
& 1070.37 & 27.98 \\

& $v$-pred., unweighted
& 10.72 & 2.58
& 29.76 & 4.65
& 54.79 & 6.55 \\

& $v$-pred., static ($\lambda=0.6$)
& 11.93 & 2.78
& 24.35 & 3.61
& 42.21 & 5.80 \\

& $v$-pred., static ($\lambda=0.9$)
& 15.73 & 3.12
& 21.85 & 3.68
& 35.09 & 4.97 \\

& $v$-pred., dynamic
& \textbf{8.83} & \textbf{1.56}
& \textbf{18.84} & \textbf{3.21}
& \textbf{29.45} & \textbf{4.56} \\

\midrule

\multirow{5}{*}{Electricity}
& $\varepsilon$-pred.
& 2415.27 & 31.90
& 2173.88 & 28.74
& 2113.45 & 27.95 \\

& $v$-pred., unweighted
& 33.92 & 3.50
& 36.05 & 3.42
& 30.71 & 3.05 \\

& $v$-pred., static ($\lambda=0.6$)
& 51.03 & 4.14
& 41.07 & 3.36
& 37.51 & 3.15 \\

& $v$-pred., static ($\lambda=0.9$)
& 36.20 & 3.80
& 33.04 & 3.39
& 30.19 & 2.96 \\

& $v$-pred., dynamic
& \textbf{32.88} & \textbf{3.37}
& \textbf{31.83} & \textbf{3.10}
& \textbf{25.67} & \textbf{2.79} \\

\midrule

\multirow{5}{*}{Household}
& $\varepsilon$-pred.
& 61.79 & 5.98
& 67.11 & 6.40
& 63.03 & 6.16 \\

& $v$-pred., unweighted
& 0.28 & 0.42
& 0.70 & 0.63
& 0.94 & 0.80 \\

& $v$-pred., static ($\lambda=0.6$)
& 0.31 & 0.46
& 0.50 & 0.54
& 0.60 & 0.59 \\

& $v$-pred., static ($\lambda=0.9$)
& 0.27 & 0.39
& 0.41 & 0.49
& 0.52 & 0.57 \\

& $v$-pred., dynamic
& \textbf{0.25} & \textbf{0.38}
& \textbf{0.38} & \textbf{0.46}
& \textbf{0.38} & \textbf{0.46} \\

\midrule

\multirow{5}{*}{Appliances}
& $\varepsilon$-pred.
& 417812.50 & 525.00
& 485956.25 & 575.21
& 505110.42 & 551.88 \\

& $v$-pred., unweighted
& 66492.21 & 178.55
& 42528.65 & 141.34
& 26032.05 & 107.97 \\

& $v$-pred., static ($\lambda=0.6$)
& 59012.34 & 166.06
& 42397.83 & 137.86
& 21131.38 & 85.58 \\

& $v$-pred., static ($\lambda=0.9$)
& 57985.70 & 171.57
& 39510.84 & 138.79
& 25429.74 & 118.34 \\

& $v$-pred., dynamic
& \textbf{53959.11} & \textbf{151.88}
& \textbf{38401.84} & \textbf{127.80}
& \textbf{17996.79} & \textbf{81.65} \\

\bottomrule
\end{tabular}
}
\end{table}

\section{Static Loss Weighting versus Timestep-Aware Dynamic Weighting}
\label{app:static}

Figure~\ref{fig:lambda_sweep} shows validation MSE and gradient norm as
functions of the static loss weighting coefficient $\lambda$.

\begin{figure}[H]
\centering
\includegraphics[width=0.7\linewidth]{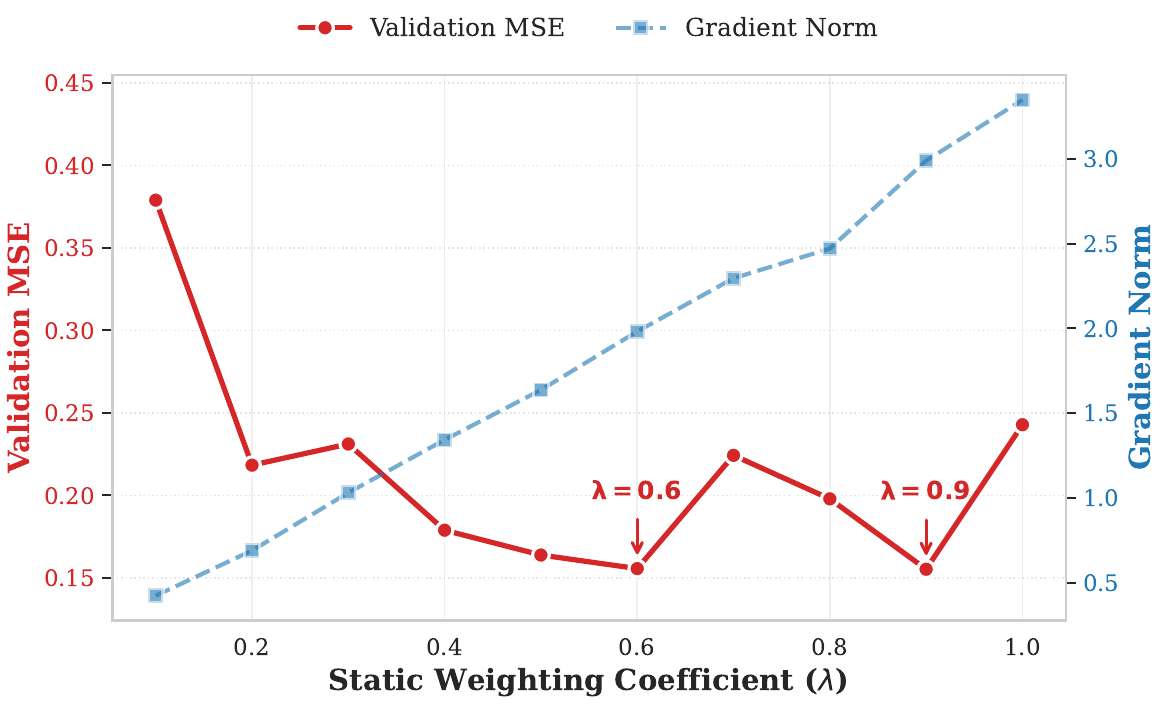}
\caption{
Validation MSE and gradient norm as functions of the static loss
weighting coefficient $\lambda$.
}
\label{fig:lambda_sweep}
\end{figure}

Static loss weighting provides uniform compression across diffusion
timesteps. The static sweep shows that moderate values of $\lambda$ can
improve validation error, but the relationship between validation error
and gradient norm is not strictly monotonic. This motivates timestep-aware dynamic weighting, which follows the
diffusion schedule to provide timestep-dependent compression instead of
a single global coefficient.

\section{Additional Proof Details}
\label{app:proof}

We provide additional details for the local scaling and clipping-aware
analysis in Section~\ref{sec:theory}. The analysis assumes a fixed
DP-SGD clipping threshold $C$ and deterministic timestep weights
$\lambda_t\in(0,1]$ that do not depend on private data, gradients,
model parameters, or the DP noise realization.

For a private example $i$ at diffusion timestep $t$, suppose its
unweighted gradient is locally bounded as
\[
\left\|
\nabla_\theta \ell_i(t)
\right\|_2
\leq G_t.
\]
From Proposition~\ref{prop:gradient_scaling},
\[
\nabla_\theta \mathcal{L}_{w,i}(t)
=
\lambda_t
\nabla_\theta \ell_i(t),
\]
and therefore
\[
\left\|
\nabla_\theta \mathcal{L}_{w,i}(t)
\right\|_2
\leq
\lambda_t G_t.
\]
Static loss weighting sets $\lambda_t=\lambda$, applying the same
compression factor to all diffusion timesteps. Timestep-aware dynamic
weighting sets $\lambda_t=\bar{\alpha}_t$, so the corresponding bound is
reduced more strongly at late low-SNR diffusion timesteps where
$\bar{\alpha}_t$ is small.

We next give the explicit derivation of the clipping-distortion result
used in Corollary~\ref{cor:tail_excess}. For a pre-clipping gradient
$g$, DP-SGD applies
\[
\tilde g
=
g\cdot
\min\left(
1,
\frac{C}{\|g\|_2}
\right).
\]
If $\|g\|_2\leq C$, then $\tilde g=g$ and the clipping distortion is
zero. If $\|g\|_2>C$, then
\[
\tilde g
=
\frac{C}{\|g\|_2}g,
\]
and hence
\[
\begin{aligned}
\|g-\tilde g\|_2
&=
\left\|
g-\frac{C}{\|g\|_2}g
\right\|_2 \\
&=
\left(
1-\frac{C}{\|g\|_2}
\right)
\|g\|_2 \\
&=
\|g\|_2-C.
\end{aligned}
\]
Combining the two cases gives
\[
d_C(g)
=
\|g-\tilde g\|_2
=
(\|g\|_2-C)_+.
\]

These derivations complement the local scaling and clipping-aware
analysis in Section~\ref{sec:theory} by further characterizing gradient
behavior under a fixed DP-SGD clipping-and-noise mechanism.

\section{Sensitivity to Matched DP-SGD Configurations}
\label{app:privacy}

Table~\ref{tab:privacy_sensitivity} reports the sensitivity results
under matched DP-SGD configurations indexed by
$\epsilon\in\{6,8,10\}$. For each dataset and missing ratio, MAE is
averaged over random point missingness, contiguous block missingness,
and persistent outage missingness.

\begin{table}[H]
\centering
\caption{
Interpolation MAE under matched DP-SGD configurations indexed by
$\epsilon\in\{6,8,10\}$.
Here, $\epsilon$ indexes the matched DP-SGD configuration,
$\varepsilon$-pred. denotes $\varepsilon$-prediction, and
$v$-pred., dynamic denotes $v$-prediction with timestep-aware dynamic
weighting $(\lambda_t=\bar{\alpha}_t)$.
For each dataset and missing ratio, MAE is averaged over random point
missingness, contiguous block missingness, and persistent outage
missingness.
Lower values are better, and the lower value within each matched
configuration is shown in bold.
Results are averaged over three random seeds ($42$, $43$, $44$).
}
\label{tab:privacy_sensitivity}

\small
\setlength{\tabcolsep}{4.0pt}
\renewcommand{\arraystretch}{1.08}

\resizebox{0.98\linewidth}{!}{
\begin{tabular}{lc cc cc cc}
\toprule
\multirow{2}{*}{\textbf{Dataset}} &
\multirow{2}{*}{\textbf{Missing Ratio}} &
\multicolumn{2}{c}{\textbf{$\epsilon=6$}} &
\multicolumn{2}{c}{\textbf{$\epsilon=8$}} &
\multicolumn{2}{c}{\textbf{$\epsilon=10$}} \\
\cmidrule(lr){3-4}
\cmidrule(lr){5-6}
\cmidrule(lr){7-8}
& &
$\varepsilon$-pred. & $v$-pred., dynamic &
$\varepsilon$-pred. & $v$-pred., dynamic &
$\varepsilon$-pred. & $v$-pred., dynamic \\
\midrule

\multirow{4}{*}{ETTh1}
& 10\%
& 24.98 & \textbf{5.25}
& 26.28 & \textbf{3.13}
& 24.68 & \textbf{2.35} \\
& 20\%
& 28.57 & \textbf{6.24}
& 27.81 & \textbf{4.38}
& 28.50 & \textbf{2.82} \\
& 30\%
& 26.89 & \textbf{7.66}
& 29.51 & \textbf{4.38}
& 25.55 & \textbf{3.13} \\
& 50\%
& 28.07 & \textbf{8.30}
& 29.68 & \textbf{4.92}
& 25.72 & \textbf{3.48} \\

\midrule

\multirow{4}{*}{ETTm1}
& 10\%
& 26.23 & \textbf{5.04}
& 28.55 & \textbf{5.08}
& 29.58 & \textbf{2.37} \\
& 20\%
& 26.62 & \textbf{4.83}
& 28.38 & \textbf{5.43}
& 27.80 & \textbf{3.11} \\
& 30\%
& 27.26 & \textbf{4.99}
& 27.64 & \textbf{4.85}
& 27.97 & \textbf{3.32} \\
& 50\%
& 26.07 & \textbf{4.97}
& 28.89 & \textbf{4.42}
& 28.66 & \textbf{3.11} \\

\midrule

\multirow{4}{*}{Electricity}
& 10\%
& 34.92 & \textbf{4.65}
& 34.90 & \textbf{2.61}
& 21.61 & \textbf{2.99} \\
& 20\%
& 34.25 & \textbf{4.26}
& 29.58 & \textbf{3.27}
& 26.25 & \textbf{3.07} \\
& 30\%
& 37.80 & \textbf{4.52}
& 31.69 & \textbf{3.21}
& 29.08 & \textbf{2.84} \\
& 50\%
& 37.44 & \textbf{6.14}
& 31.91 & \textbf{2.92}
& 29.53 & \textbf{3.09} \\

\midrule

\multirow{4}{*}{Household}
& 10\%
& 6.87 & \textbf{0.64}
& 5.15 & \textbf{0.34}
& 6.54 & \textbf{0.36} \\
& 20\%
& 7.78 & \textbf{0.60}
& 5.42 & \textbf{0.43}
& 6.16 & \textbf{0.46} \\
& 30\%
& 7.71 & \textbf{0.55}
& 5.46 & \textbf{0.38}
& 5.59 & \textbf{0.42} \\
& 50\%
& 7.46 & \textbf{0.61}
& 5.60 & \textbf{0.42}
& 6.18 & \textbf{0.43} \\

\midrule

\multirow{4}{*}{Appliances}
& 10\%
& 607.33 & \textbf{120.59}
& 387.67 & \textbf{122.62}
& 453.00 & \textbf{104.98} \\
& 20\%
& 674.56 & \textbf{126.50}
& 324.21 & \textbf{129.09}
& 489.12 & \textbf{115.72} \\
& 30\%
& 694.71 & \textbf{116.61}
& 354.60 & \textbf{121.76}
& 559.89 & \textbf{107.61} \\
& 50\%
& 700.49 & \textbf{129.12}
& 369.51 & \textbf{131.71}
& 550.69 & \textbf{120.44} \\

\bottomrule
\end{tabular}
}
\end{table}

Across all three matched DP-SGD configurations,
$v$-pred., dynamic yields lower average MAE than
$\varepsilon$-prediction for all 20 dataset--severity conditions.
The method ordering is therefore preserved across the evaluated
configurations. Although absolute performance does not vary monotonically
across the evaluated configurations in every condition, the advantage of
the proposed method is maintained across
$\epsilon\in\{6,8,10\}$. This experiment provides a sensitivity analysis across matched
DP-SGD configurations with different privacy budgets.

\section{Controlled Gradient Diagnostic}
\label{app:controlled_gradient}

The training-time gradient diagnostics in
Section~\ref{sec:gradient_diagnostics} characterize the gradient distributions for individual private examples encountered during actual DP-SGD optimization.
Because those diagnostics are obtained from separately trained objective
variants, they do not isolate the immediate effect of timestep-aware
dynamic weighting at a fixed model state. We therefore conduct an
additional controlled gradient diagnostic in which the model state and
stochastic inputs are matched and only the weighting rule is changed from
$\lambda_t=1$ to $\lambda_t=\bar{\alpha}_t$.

\subsection{Diagnostic Setup}
\label{app:controlled_gradient_setup}

We fix a trained $v$-prediction checkpoint and keep the model state,
input batch, observation mask, diffusion noise, and timestep realizations
identical between two objective evaluations. The only difference is the
timestep weighting:
\[
v\text{-pred., unweighted},
\qquad
\lambda_t=1,
\]
and
\[
v\text{-pred., dynamic},
\qquad
\lambda_t=\bar{\alpha}_t.
\]

No parameter update is performed during the diagnostic. Let $G_u$ and
$G_d$ denote the pre-clipping aggregate gradients formed from the
corresponding unclipped gradients for the private examples under the
unweighted and dynamically weighted objectives, respectively. We record
their gradient norms and cosine similarity,
\begin{equation}
\operatorname{CosSim}(G_u,G_d)
=
\frac{G_u^\top G_d}
{\|G_u\|_2\|G_d\|_2}.
\label{eq:aggregate_cosine}
\end{equation}

When all examples receive the same scalar weight, gradient scaling does
not change the aggregate direction. In the present setting, different
examples are associated with different sampled diffusion timesteps and
therefore receive different values of $\lambda_t$. Timestep-aware
dynamic weighting can therefore change their relative contributions to
the aggregate gradient.

Because both evaluations use the same $v$-prediction target, this
diagnostic isolates the effect of timestep-aware dynamic weighting rather
than target reparameterization. The aggregate-gradient quantities
reported here are complementary to the training-time gradient diagnostics
in Section~\ref{sec:gradient_diagnostics}. In DP-SGD, clipping is applied
to the gradient for each private example before aggregation; accordingly,
the pre-clipping aggregate-gradient norms in this diagnostic are not used
to define clipping events or overall clipping fractions.

\subsection{Mixed-Timestep Diagnostic}
\label{app:mixed_t_gradient}

We first retain the mixed-timestep sampling used during training.
Table~\ref{tab:mixed_t_gradient} reports the aggregate-gradient
statistics obtained when only the weighting rule is changed.

\begin{table}[t]
\centering
\caption{
Mixed-timestep aggregate-gradient diagnostic.
The checkpoint, inputs, masks, diffusion noise, and sampled timesteps
are matched between unweighted $v$-prediction and $v$-prediction with
timestep-aware dynamic weighting; only the weighting rule is changed.
}
\label{tab:mixed_t_gradient}

\small
\setlength{\tabcolsep}{4.0pt}
\renewcommand{\arraystretch}{1.10}

\resizebox{0.98\linewidth}{!}{
\begin{tabular}{lccccc}
\toprule
\multirow{2}{*}{\textbf{Dataset}} &
\multicolumn{2}{c}{\textbf{Median Aggregate-Gradient Norm}} &
\multicolumn{2}{c}{\textbf{95th-Percentile Aggregate-Gradient Norm}} &
\multirow{2}{*}{\textbf{Gradient Cosine Similarity}} \\
\cmidrule(lr){2-3}
\cmidrule(lr){4-5}
&
\textbf{Unweighted} &
\textbf{Dynamic} &
\textbf{Unweighted} &
\textbf{Dynamic} &
\\
\midrule

ETTh1
& 4.748 & 2.123
& 14.099 & 14.066
& 0.774 \\

ETTm1
& 4.417 & 2.516
& 11.427 & 11.189
& 0.822 \\

Appliances
& 4.437 & 1.786
& 28.443 & 28.319
& 0.754 \\

\bottomrule
\end{tabular}
}
\end{table}

Timestep-aware dynamic weighting reduces the median aggregate-gradient
norm on all three datasets, while the change in the 95th-percentile
statistic is small under mixed-timestep sampling. The controlled comparison therefore reveals a timestep-dependent change
in the aggregate-gradient distribution induced by the weighting schedule.

The gradient cosine similarities are 0.774, 0.822, and 0.754 for
ETTh1, ETTm1, and Appliances, respectively. Since the model state and
stochastic inputs are matched between the two evaluations, these values
indicate that timestep-aware dynamic weighting changes the relative
contributions to the aggregate gradient rather than acting as a single
global scalar multiplier.

\subsection{Timestep-Stratified Diagnostic}
\label{app:timestep_gradient}

To localize the weighting effect along the diffusion schedule, we use
the implementation's zero-based timestep indexing and divide
$t=0,\ldots,999$ into ten consecutive intervals,
\[
0\text{--}99,\;
100\text{--}199,\;
\ldots,\;
900\text{--}999,
\]
and perform 30 controlled evaluations within each interval.
Table~\ref{tab:timestep_gradient} reports the complete results.

\begin{table}[t]
\centering
\caption{
Timestep-stratified aggregate-gradient diagnostic for unweighted
$v$-prediction and $v$-prediction with timestep-aware dynamic weighting.
Each interval contains 30 controlled evaluations with matched model
state and stochastic inputs between the two objective evaluations.
}
\label{tab:timestep_gradient}

\scriptsize
\setlength{\tabcolsep}{2.6pt}
\renewcommand{\arraystretch}{1.06}

\resizebox{0.98\linewidth}{!}{
\begin{tabular}{llccccc}
\toprule
\multirow{2}{*}{\textbf{Dataset}} &
\multirow{2}{*}{\textbf{Timestep Interval}} &
\multicolumn{2}{c}{\textbf{Median Aggregate-Gradient Norm}} &
\multicolumn{2}{c}{\textbf{95th-Percentile Aggregate-Gradient Norm}} &
\multirow{2}{*}{\textbf{Gradient Cosine Similarity}} \\
\cmidrule(lr){3-4}
\cmidrule(lr){5-6}
&
&
\textbf{Unweighted} &
\textbf{Dynamic} &
\textbf{Unweighted} &
\textbf{Dynamic} &
\\
\midrule

\multirow{10}{*}{ETTh1}
& 0--99
& 4.022 & 3.981
& 5.298 & 5.271
& 1.000 \\
& 100--199
& 4.047 & 3.833
& 9.552 & 9.467
& 1.000 \\
& 200--299
& 3.383 & 2.854
& 8.520 & 8.392
& 0.999 \\
& 300--399
& 3.489 & 2.440
& 9.362 & 9.094
& 0.997 \\
& 400--499
& 4.054 & 2.262
& 12.376 & 12.101
& 0.996 \\
& 500--599
& 5.015 & 2.085
& 11.172 & 10.985
& 0.992 \\
& 600--699
& 6.186 & 1.657
& 11.481 & 10.544
& 0.980 \\
& 700--799
& 7.559 & 1.161
& 10.149 & 8.208
& 0.966 \\
& 800--899
& 8.667 & 0.526
& 25.630 & 25.885
& 0.956 \\
& 900--999
& 8.510 & 0.094
& 21.530 & 18.065
& 0.882 \\

\midrule

\multirow{10}{*}{ETTm1}
& 0--99
& 5.190 & 5.132
& 8.074 & 8.027
& 1.000 \\
& 100--199
& 4.092 & 3.859
& 8.170 & 8.053
& 1.000 \\
& 200--299
& 3.583 & 3.094
& 9.312 & 9.217
& 0.999 \\
& 300--399
& 3.690 & 2.602
& 11.361 & 11.261
& 0.998 \\
& 400--499
& 5.005 & 2.882
& 12.740 & 12.577
& 0.997 \\
& 500--599
& 4.353 & 1.751
& 12.763 & 12.402
& 0.993 \\
& 600--699
& 5.693 & 1.465
& 15.541 & 15.132
& 0.989 \\
& 700--799
& 7.353 & 1.047
& 12.916 & 11.983
& 0.980 \\
& 800--899
& 8.345 & 0.518
& 13.286 & 11.616
& 0.955 \\
& 900--999
& 8.791 & 0.078
& 13.957 & 10.720
& 0.880 \\

\midrule

\multirow{10}{*}{Appliances}
& 0--99
& 3.658 & 3.628
& 5.569 & 5.537
& 1.000 \\
& 100--199
& 3.430 & 3.228
& 6.609 & 6.505
& 1.000 \\
& 200--299
& 2.900 & 2.526
& 6.354 & 6.225
& 0.999 \\
& 300--399
& 2.732 & 1.987
& 11.300 & 11.161
& 0.998 \\
& 400--499
& 3.438 & 1.929
& 17.743 & 17.375
& 0.997 \\
& 500--599
& 4.429 & 1.798
& 126.878 & 126.520
& 0.995 \\
& 600--699
& 7.245 & 2.027
& 9.961 & 9.204
& 0.984 \\
& 700--799
& 8.782 & 1.489
& 27.940 & 27.454
& 0.978 \\
& 800--899
& 7.934 & 0.662
& 14.874 & 11.798
& 0.941 \\
& 900--999
& 8.379 & 0.122
& 235.227 & 230.690
& 0.899 \\

\bottomrule
\end{tabular}
}
\end{table}

The timestep-stratified results show that the effect of timestep-aware
dynamic weighting depends on diffusion timestep. At $t=0$--$99$, the
gradient cosine similarity is 1.000 on all three datasets and the
aggregate-gradient norm statistics differ only slightly. The difference
becomes more pronounced toward the late low-SNR region.

The strongest attenuation of the median aggregate-gradient norm occurs
near the end of the diffusion schedule. For example, on ETTh1 at
$t=900$--$999$, the median norm decreases from 8.510 to 0.094.
ETTm1 and Appliances exhibit the same qualitative late-timestep pattern.

The gradient cosine similarity also decreases toward later diffusion
timesteps. In $t=900$--$999$, it is 0.882, 0.880, and 0.899 on
ETTh1, ETTm1, and Appliances, respectively. This schedule-dependent
behavior is consistent with timestep-aware dynamic weighting
$\lambda_t=\bar{\alpha}_t$, whose values become smaller toward the end
of the diffusion schedule.

The 95th-percentile statistic does not decrease in every interval. For
example, on ETTh1 at $t=800$--$899$, it changes from 25.630 to
25.885. The controlled measurements therefore indicate a
timestep-selective weighting effect rather than a uniform reduction of
every gradient statistic.

\subsection{Relation to Training-Time Gradient Diagnostics}
\label{app:controlled_gradient_summary}

The controlled gradient diagnostic and the training-time gradient
diagnostics capture complementary aspects of the optimization process.
Section~\ref{sec:gradient_diagnostics} characterizes the gradient
distribution for individual private examples encountered during actual
DP-SGD training, whereas the
controlled gradient diagnostic isolates the immediate effect of
timestep-aware dynamic weighting at a fixed model state. The
mixed-timestep results show that timestep-aware dynamic weighting changes
the relative contributions to the pre-clipping aggregate gradient, while
the timestep-stratified results show that this effect becomes more
pronounced toward late low-SNR diffusion timesteps. These observations
are consistent with the training-time gradient diagnostics and with the
illustrative timestep-wise gradient diagnostic in
Figure~\ref{fig:gradient_diagnostics}, and provide additional empirical
support for the theoretical mechanism characterized in
Section~\ref{sec:theory}.
%%%%%%%%%%%%%%%%%%%%%%%%%%%%%%%

\section*{Declaration of generative AI and AI-assisted technologies
in the manuscript preparation process}

During the preparation of this work, the authors used ChatGPT (OpenAI)
to assist with language refinement, readability improvement, and manuscript
organization. After using this tool/service, the authors reviewed and edited
the content as needed and take full responsibility for the content of the
published article.

\bibliographystyle{elsarticle-harv}
\bibliography{cas-refs}

\end{document}